\icmltitlerunning{StochasticRank: Globally Convergent Stochastic Ranking}
\DeclareMathOperator*{\argmin}{arg\,min}
\newtheorem{statement}{Statement}
\newtheorem{definition}{Definition}
\newtheorem{lemma}{Lemma}
\newtheorem{corollary}{Corollary}
\newtheorem{theorem}{Theorem}
\newcommand{\R}{\mathbb{R}}
\begin{document}

\twocolumn[
\icmltitle{StochasticRank: Global Optimization of Scale-Free Discrete Functions}



\begin{icmlauthorlist}
\icmlauthor{Aleksei Ustimenko}{ya}
\icmlauthor{Liudmila Prokhorenkova}{ya,mipt,hse}
\end{icmlauthorlist}

\icmlsetsymbol{equal}{*}
\icmlaffiliation{ya}{Yandex, Moscow, Russia}
\icmlaffiliation{mipt}{Moscow Institute of Physics and Technology, Dolgoprudny, Moscow Region, Russia}
\icmlaffiliation{hse}{Higher School of Economics, Moscow, Russia}

\icmlcorrespondingauthor{Aleksei Ustimenko}{austimenko@yandex-team.ru}
\icmlkeywords{learning to rank, information retrieval, non-convex optimization, stochastic differential equations}

\vskip 0.3in
]



\printAffiliationsAndNotice{}  

\begin{abstract}
In this paper, we introduce a powerful and efficient framework for direct optimization of ranking metrics. The problem is ill-posed due to the discrete structure of the loss, and to deal with that, we introduce two important techniques: stochastic smoothing and novel gradient estimate based on partial integration.
We show that classic smoothing approaches may introduce bias and present a universal solution for a proper debiasing.
Importantly, we can guarantee \textit{global convergence} of our method by adopting a recently proposed Stochastic Gradient Langevin Boosting algorithm. Our algorithm is implemented as a part of the CatBoost gradient boosting library and outperforms the existing approaches on several learning-to-rank datasets. In addition to ranking metrics, our framework applies to any scale-free discrete loss function. 
\end{abstract}

\section{Introduction}

The quality of ranking algorithms is traditionally measured by ranking quality metrics such as Normalized Discounted Cumulative Gain (NDCG), Expected Reciprocal Rank (ERR), Mean Average Precision (MAP), Mean Reciprocal Rank (MRR), and so on~\cite{Sakai13}. These metrics are defined on a list of documents sorted by their predicted relevance to a query and capture the utility of that list for users of a search engine, who are more likely to scan documents starting at the top. 
Direct optimization of ranking metrics is an extremely challenging problem since sorting makes them piecewise constant (as functions of predicted relevances), so they are neither convex nor smooth.
Many algorithms were proposed for different ranking objectives in the learning-to-rank (LTR) research field. We refer to~\citet{Liu:2009} for a systematic overview of some classic methods. 

To deal with the discrete structure of a ranking loss, one can use some smooth approximation, which is easier to optimize. This technique lies behind such well-known algorithms as 
SoftRank~\cite{Taylor2008}, ApproxNDCG~\cite{qin2010general}, RankNet~\cite{burges2010ranknet}, etc.
The obtained smooth function can be optimized by gradient-based methods and, in particular, by Stochastic Gradient Boosting (SGB) that is known to be the learning algorithm behind most state-of-the-art LTR frameworks and is commonly preferred by major search engines~\cite{Chapelle2010,Yin2016}. 
Unfortunately, all known smoothing approaches suffer from \textit{bias} (see Sections~\ref{sec:bias_example}-\ref{sec:pi_resolution}) which prevents them from truly direct optimization. Moreover, smoothed ranking loss functions are non-convex, and existing algorithms can guarantee only local optima.

Our ultimate goal is to solve these problems and propose a \textit{truly direct} LTR algorithm with provable guarantees of global convergence and generalization. 
We adopt a theoretical approach, so we start with formal definitions of the class of ranking losses and its generalization to \textit{scale-free (SF) discrete loss functions} (Section~\ref{sec:general_def}). Our results hold for the general class of SF losses, which, in addition to all ranking metrics, includes, e.g., a recently proposed loss function for Learning-to-Select-with-Order~\cite{vorobev2019learning}.
Then, to mitigate the discontinuity of the loss, we use stochastic smoothing. We prove that previous smoothing-based approaches are inconsistent with the underlying loss (due to the problem of ties, which we discuss in the next section) and propose a universal solution to this problem (relevance-based consistent smoothing, see Section~\ref{sec:pi_resolution}). Next, we derive a novel stochastic gradient estimate, which can be applied to the entire class of SF losses (see Section~\ref{sec:ccs}). The obtained estimate has low variance and uniformly bounded error, which is crucial for our analysis. 
Finally, to guarantee \textit{global} convergence of the algorithm, we adopt a recently proposed Stochastic Gradient Langevin Boosting (SGLB) algorithm~\cite{SGLB}. SGLB is based on a well studied Stochastic Gradient Langevin Dynamics~\cite{GelfandGAA, DBLP:journals/corr/RaginskyRT17, Erdogdu:2018:GNO:3327546.3327636} and converges globally for a wide range of loss functions including non-convex ones.
We adapt SGLB to our setting and obtain a gradient boosting algorithm that converges globally for the entire class of SF loss functions with provable generalization guarantees (see Section~\ref{sec:global}). 

To sum up, to the best of our knowledge, the proposed StochasticRank algorithm is the first \textit{globally converging} LTR method with provable guarantees that optimizes \textit{exactly} the underlying ranking quality loss. 
Stochastic\-Rank is implemented within the official CatBoost library~\cite{catboost, CatBoost_documentation}. Our experiments show that StochsticRank outperforms the existing approaches on several LTR datasets. 

The rest of the paper is organized as follows. In the next section, we briefly overview the related research on learning to rank. In Section~\ref{formalization}, we formalize the problem and, in particular, define a general class of ranking loss functions. In Section~\ref{sec:smoothing}, we formulate the problem of smoothing bias and propose an unbiased solution. Then, in Section~\ref{sec:ccs}, we derive a novel stochastic gradient estimate for the whole class of loss functions under consideration. In Section~\ref{sec:global}, we show how SGLB can be used to achieve global convergence. Finally, Section~\ref{Experiments} empirically compares the proposed algorithm with existing approaches, and Section~\ref{sec:conclusion} concludes the paper.

\section{Related Work}\label{sec:related_work}

Usually, researches divide all LTR methods into three categories: pointwise, pairwise, and listwise~\cite{Liu:2009}.

\textbf{Pointwise} are the earliest and simplest methods: they approximate relevance labels based on simple or ordinal regression or classification. Such methods were shown to be ineffective for LTR, since loss functions they optimize (e.g., RMSE for relevance labels) differ significantly from the target ranking metric, e.g., NDCG@k.

\textbf{Pairwise} methods make a step forward and focus on pairwise preferences and thus known to outperform pointwise approaches significantly. Nevertheless, pairwise approaches still suffer from the problem of solving a different task rather than optimizing a ranking quality objective. 

\textbf{Listwise} methods try to solve the problem directly by developing either smooth proxies of the target ranking metric like SoftRank~\cite{Taylor2008}, BoltzRank~\cite{volkovs2009boltzrank}, ApproxNDCG~\cite{qin2010general}, RankNet~\cite{burges2010ranknet} or by Majorization-Minimization procedure that builds a convex upper bound on the metric on each iteration like LambdaMART~\cite{wu2010adapting}, LambdaLoss~\cite{lambdaloss}, PermuRank~\cite{xudirect}, SVMRank~\cite{cao2006adapting}, etc.

As discussed in the previous section, algorithms based on smooth approximations suffer from bias and local optima. 
Also, there are listwise approaches that try to optimize the target loss function without smoothing. For instance, Direct\-Rank~\cite{tan2013direct} constructs an ensemble of decision trees, where the values in the leaves are chosen to optimize the original loss. However, due to greediness, this approach can guarantee only local optima. 

Finally, let us note that algorithms optimizing a convex upper bound instead of the original loss cannot be truly direct since the optimum for the upper bound can potentially be far away from the true optimum. This is nicely illustrated by~\citet{nguyen2013algorithms} for accuracy optimization.
Let us also mention a recent approach for improving learning-to-rank algorithms by adding Gumbel noise to model predictions~\cite{48689}. This is a regularization technique since it builds a convex upper bound on any given convex loss (e.g., LambdaMART).\footnote{\citet{Nesterov2017RandomGM} prove this for Gaussian noise, but the same result generalizes to any centered noise.}
Thus, from a theoretical point of view, this approach cannot be truly direct since it uses convex upper bounding. 

The issue of smoothing bias mentioned in the introduction is connected to the problem of ties: if predicted relevances of some documents coincide, one has to order them somehow to compute a ranking metric. This situation may occur when two documents have equal features. More importantly, ties are always present in boosting algorithms based on discrete weak learners such as decision trees. Unfortunately, this problem is rarely addressed in LTR papers. In practice, it is reasonable to use the \textit{worst} permutation. First, due to strong penalization, it would force an optimization algorithm to avoid ties. Second, in practice, one cannot know how a production system would rank the items, and often some attribute negatively correlated with relevance is used (e.g., sorting by a bid in online auctions). The importance of using the worst permutation is also discussed by~\citet{rudin2018direct}, and this ordering is adopted in some open-source libraries like CatBoost~\cite{catboost}. An alternative choice is to compute the expected value of a ranking metric for a \textit{random} permutation. This choice is rarely used in practice, since it is computationally complex and gives non-trivial scores to trivial constant predictions, but is often assumed (explicitly or implicitly) by LTR algorithms~\cite{kustarev2011smoothing}.

\section{Problem Formalization}
\label{formalization}

\subsection{Examples of Ranking Loss Functions}\label{sec:ranking_examples}

Before we introduce a general class of loss functions, let us define classic ranking quality functions widely used throughout the literature and in practice.\footnote{To obtain the \textit{loss} function from the corresponding \textit{quality} function, we multiply it by $-1$. }
These loss functions depend on $z$, which is a vector of \textit{scores} produced by the model, and $r$, which is a vector of relevance labels for a given query. The length of these vectors is denoted by $n$ and can be different for different queries.

Let $s = \mathrm{argsort}(z)$, i.e., $s_i$ is the index of a document at $i$-th position if documents are ordered according to their scores (if $z_{i} = z_{j}$ for $j \ne i$, then we place the less relevant one first).
Let us define $\mathrm{DCG@k}$, where $k$ denotes the number of top documents we are interested in:
\begin{equation}
    \mathrm{DCG@k}(z, r) = \sum_{i=1}^{\min\{n, k\}} \frac{2^{r_{s_i}} - 1}{2^4\log_2 (i + 1)}, 
\end{equation}
where $r_i\in [0, 4]$ are relevance labels. 
This quality function is called \emph{Discounted Cumulative Gain}: for each document, the numerator corresponds to \textit{gain} for the relevance, while the denominator \textit{discounts} for a lower position. 
$\mathrm{NDCG@k}$ is a normalized variant of $\mathrm{DCG@k}$:

\begin{equation}
    \mathrm{NDCG@k}(z, r) = \frac{\mathrm{DCG@k}(z, r)}{\max_{z' \in \mathbb{R}^n}{\mathrm{DCG@k}}(z', r)}.
\end{equation}
{\it Expected Reciprocal Rank} $\mathrm{ERR@k}$ assumes that $r_j \in [0, 1]$:
\begin{equation}
    \mathrm{ERR@k}(z, r) = \sum_{i=1}^{\min\{n, k\}} \frac{r_{s_i}}{i}\prod_{j=1}^{i-1} (1-r_{s_j}). 
\end{equation}
\textit{Mean reciprocal rank} ($\mathrm{MRR}$) is used for binary relevance labels $r_j \in \{0, 1\}$:
\begin{equation}
    \mathrm{MRR}(z, r) = \sum_{i=1}^{n} \frac{r_{s_i}}{i}\prod_{j=1}^{i-1} (1-r_{s_j}),
\end{equation}
which is the inverse rank of the first relevant document.

Finally, let us define a quality function for the LSO (learning to select with order) problem introduced by \citet{vorobev2019learning}, which is not exactly a ranking metric, but has a similar structure. 
The order of elements is predefined (documents are sorted by their indices), but the list of documents to be included is determined by $(\mathbb{1}_{\{z_i > 0\}})_{i=1}^n\in \{0, 1\}^n$:
\begin{equation}
    \mathrm{DCG\text{-}RR}(z, r) = \sum_{i=1}^{n} \frac{r_{i}\,\mathbb{1}_{\{z_{i} > 0\}}}{1+\sum_{j < i} \mathbb{1}_{\{z_{j} > 0\}}} \,.
\end{equation}
In the sum above, for each included document we divide its relevance by its rank.

\subsection{Generalized Ranking Loss Functions}\label{sec:general_def}

To develop a stochastic ranking theory, we first formalize the class of loss functions to which our results apply. We start with a very general class of \textit{scale-free (SF) discrete loss functions}. Further, by $\xi$ we denote a vector of context, which may include relevance and any other factors affecting the ranking quality value (like query type or document topic).

\begin{definition} 
\label{GRQF}
A function $L(z, \xi): \coprod_{n > 0} \mathbb{R}^n \times \Xi_n \rightarrow \mathbb{R}$ is a \emph{Scale-Free Discrete Loss Function} iff the following conditions hold:
\begin{itemize}
\item \textbf{Uniform boundedness}: There exists a constant $l > 0$ such that $|L(z, \xi)| \le l$ holds  $\forall n$, $\forall \xi\in \Xi_n$, $\forall z\in\mathbb{R}^n$;
\item \textbf{Discreteness on subspaces}: For each $n\in \mathbb{N}$ and linear subspace $V\subset \mathbb{R}^n$ there exist convex open subsets $U_1, \ldots, U_k\subset V, \, k=k(n, V)$ (w.r.t.~induced topology on $V$), 
mutually disjoint $U_i \cap U_j = \emptyset$ for $i\ne j$, with everywhere dense union $\overline{\cup_i U_i} = V$ ($\overline{X}$ denotes the closure of $X$ w.r.t.~the ambient topology), such that for any $\xi \in \Xi_n$ and $i \le k$ holds $L(z, \xi)\big|_{U_i} \equiv \mathrm{const}(i, \xi, V)$;
\item \textbf{Jumps regularity}: By reusing $U_i$ defined above, for any $z \not\in \cup_i U_i$ either of the following conditions holds: 
$$\lim\inf_{z'\rightarrow z} L(z', \xi) < L(z, \xi)\le \lim\sup_{z'\rightarrow z} L(z', \xi),$$
$$\lim\inf_{z'\rightarrow z} L(z', \xi) = L(z, \xi) = \lim\sup_{z'\rightarrow z} L(z', \xi),$$
where $z' \rightarrow z$ means $z' \in \cup U_i$, $z' \rightarrow z$.
\item \textbf{Scalar freeness}: 
For any $n > 0, \xi \in \Xi_n, z \in \mathbb{R}^n, \lambda > 0$ holds $L(\lambda z, \xi) = L(z, \xi)$.
\end{itemize}
\end{definition}

We denote the class of all SF discrete loss functions by $\mathcal{R}_0$. Informally speaking, $\mathcal{R}_0$ is a class of bounded discrete functions on a sphere.
The jumps regularity property is needed to exclude the breaking points from $\argmin L$. One can show that all loss functions defined in Section~\ref{sec:ranking_examples}, including the LSO loss DCG-RR, belong to $\mathcal{R}_0$. 

StochasticRank out-of-box can be applied to any SF discrete loss function. However, to guarantee global convergence, we need to use consistent smoothing (see Section~\ref{sec:pi_resolution}), which has to be chosen based on the properties of a particular metric. We propose smoothing which is consistent for the whole class of \textit{ranking loss functions} defined below. 

Assume that $\Xi_n = \mathbb{R}^n \times \Xi_n'$ and $\xi \in \Xi_n$ is a tuple $(r, \xi')$, where $r \in \mathbb{R}^n$ is a vector of relevance labels. 
As discussed in Section~\ref{sec:related_work}, a particular definition of a ranking loss depends on tie resolution. When some documents have equal scores, we may either use the worst permutation (as commonly done in practice) or compute the average over all orderings of such documents (as usually assumed by LTR algorithms). The definition below assumes the worst permutation.

\begin{definition}
\label{RQF}
A function $L(z, \xi)\in \mathcal{R}_0$ is a \emph{Ranking Loss Function} iff the following properties hold:
\begin{itemize}
    \item \textbf{Relevance monotonicity}: For each $n > 0$ and  $z, r\in \mathbb{R}^n$, there exists $\epsilon_0 = \epsilon_0(r, z) > 0$ such that $\forall \epsilon \in (0, \epsilon_0]$ $\exists\delta = \delta(\epsilon, r, z) > 0$ such that $\forall z': \|z' - z\| < \delta$:
    $$\lim \sup_{z''\rightarrow z} L(z'', \xi) = L(z' - \epsilon r, \xi).$$
    Informally, $-r$ is the worst direction for the loss function, i.e., near a breaking point with $z_i = z_j$ and $r_i > r_j$ for some $i,j$, it is better to have $z_i > z_j$.
    \item \textbf{Strong upper semi-continuity (s.u.s.c.)}: For each $n > 0$ and  $z, r\in \mathbb{R}^n$:
    $$\lim \sup_{z' \rightarrow z} L(z', \xi) = L(z, \xi).$$
    Informally, this means that if we do not know how to rank two items (i.e., $z_i = z_j$ for $i\ne j$), then we shall rank them by placing the less relevant one first.
    \item \textbf{Translation invariance}:\footnote{This property is assumed only to be consistent with the learning-to-rank literature and can be omitted.}  For any $n > 0, r,z \in \mathbb{R}^n$, $\lambda \in \mathbb{R}$ holds:
    $L(z + \lambda \mathbb{1}_n, \xi) \equiv L(z, \xi)$,
    where $\mathbb{1}_n := (1, \ldots, 1) \in \mathbb{R}^n$.
    \item \textbf{Pairwise decision boundary:}\footnote{This condition can also be removed, but it simplifies the analysis of smoothing bias.} Partition of the space for discreteness on subspaces $\{U_i\}$ for $\mathbb{R}^n$ can be obtained as connected components of $\mathbb{R}^n\backslash\cup_{i,j}\{z:z_i-z_j=0\}$, similarly for an arbitrary subspace $V$.
\end{itemize}
\end{definition}
We denote this class of functions by $\mathcal{R}_1$. 
It can be shown that $\mathcal{R}_1$ includes all ranking losses defined in Section~\ref{sec:ranking_examples}, but not the LSO loss $\mathrm{DCG\text{-}RR}$ which does not satisfy Relevance monotonicity.

Let us now define a class $\mathcal{R}_1^{soft}$, where instead of the worst ranking for ties, we consider the expected loss of a random ranking. For this, we replace the s.u.s.c. condition by:

\begin{itemize}
    \item \textbf{Soft semi-continuity (s.s.c.)}: For each $n > 0$ and $r,z \in \mathbb{R}^n$ we have:
    $$\lim_{\sigma \rightarrow 0_+} \mathbb{E}L(z + \sigma \varepsilon, \xi) = L(z, \xi),$$
    where $\varepsilon \sim \mathcal{N}(\mathbb{0}_n, I_n)$ is a normally distributed random variable.
\end{itemize}

We will show that under some restrictive conditions (that are commonly assumed in the LTR literature), it does not matter which of the two definitions we use ($\mathcal{R}_1$ or $\mathcal{R}_1^{soft}$) as they coincide almost surely and have equal $\argmin L$ sets. However, we will explain why these conditions do not hold in practice
and in general the minimizers for $\mathcal{R}_1$ and $\mathcal{R}_1^{soft}$ do not coincide.

\subsection{Model Assumptions}

We assume that for each $n > 0$ and $\xi \in \Xi_n$ there is a model $f_\xi(\theta):\mathbb{R}^m\rightarrow \mathbb{R}^n$ such that $f_\xi(\theta) = \Phi_\xi \theta$ for some matrix $\Phi_\xi \in \mathbb{R}^{n\times m}$, where $\theta \in \mathbb{R}^{m}$ is a vector of parameters (independent from $\xi$) and $m \in \mathbb{N}$ is the number of parameters. Typically, each row of $\Phi_\xi$ is a feature vector.
Gradient boosting over decision trees satisfies this assumption. Indeed, let us consider all possible trees of a fixed depth formed by a finite number of binary splits obtained by binarization of the initial feature vectors. To get a linear model, we say that $\theta$ is a vector of leaf weights of these trees and $\Phi_\xi$ is a binary matrix formed by the binarized feature vectors.

We will also assume that $\langle \mathbb{1}_{n}, z\rangle_2 = 0$. Indeed, instead of $z = f_{\xi}(\theta)$ we can define the model as $z = f_{\xi}(\theta) - \frac{1}{n}\mathbb{1}_{n}^Tf_{\xi}(\theta)\mathbb{1}_{n}$, which is equivalent due to the translation invariance property. 

\subsection{Data Distribution}
Assume that we are given some distribution $\xi\sim \mathcal{D}$ on $\Xi := \coprod_{n > 0} \Xi_n$ meaning that $\xi$ also implicitly incorporates information about the number of items $n$, i.e., for $\xi\in \Xi$ there exists a unique number $n > 0$ so that $\xi \in \Xi_n$. 
$\mathcal{D}$ is some unknown distribution, e.g., the distribution of queries submitted to a search system. We are given a finite i.i.d. sample $\xi_1, \ldots, \xi_N \sim \mathcal{D}$ that corresponds to the train set. Let $\mathcal{D}_{N} := \frac{1}{N}\sum_{i=1}^N \delta_{\xi_i}$ be the empirical distribution. 

\subsection{Optimization Target}

The assumptions and definitions above allow us to define the expected (generalized) ranking quality for the function $L\in \mathcal{R}_0$ with respect to $\xi \sim \mathcal{D}$ and model parameters $\theta \in \mathbb{R}^m$:
$\mathcal{L}(\theta) := \mathbb{E}_{\xi\sim \mathcal{D}} L(f_\xi (\theta), \xi).
$
Our ultimate goal is to 
find $\argmin_{\theta} \mathcal{L}(\theta)$.
However, since the distribution $\mathcal{D}$ is unknown, we have only i.i.d.~samples $\xi_1, \ldots, \xi_N$ as defined above. So, we consider the expected ranking quality under the empirical distribution $\mathcal{D}_{N}$:
\begin{equation*}
    \mathcal{L}_{N}(\theta) := \mathbb{E}_{\xi\sim \mathcal{D}_{N}} L(f_\xi (\theta), \xi) = \frac{1}{N}\sum_{i=1}^{N} L(f_{\xi_i}(\theta), \xi_i).
\end{equation*}

We want to optimize $\mathcal{L}(\theta)$ \emph{globally} by optimizing $\mathcal{L}_{N}(\theta)$. 
This is possible because of the stability of global minimizers even for discontinuous functions: for $N \gg 1$ an almost minimizer of $\mathcal{L}_N(\theta)$ should be an almost minimizer of $\mathcal{L}(\theta)$~\cite{artstein}. 

Thus, we need to find a global minimizer of $\mathcal{L}_N$. 
Due to the discrete structure, we can ignore sets of zero Lebesgue measure. Recall that essential infimum ($\mathrm{ess }\inf$) is infimum that ignores sets of zero measure and $\mathrm{int}\, U$ denotes an open interior of the set $U$.

\begin{definition}
\label{ArgminDef}
For any function $\mathcal{L}(\theta):\mathbb{R}^m\rightarrow \mathbb{R}$ with $\mathcal{L}_* := \mathrm{ess }\inf_{\theta\in\mathbb{R}^m} \mathcal{L}(\theta) > -\infty$, we define
$$\argmin \mathcal{L}(\theta) := \mathrm{int}\big\{\theta\in\mathbb{R}^m: \mathcal{L}(\theta) = \mathcal{L}_* \big\}.$$
\end{definition}
We need this unusual definition because of the discrete structure of our loss: we want to exclude the breaking points from $\argmin$. One can see that  despite $L(\cdot, \cdot)$ satisfies Jumps regularity, the function $\mathcal{L}_N(\theta)$ does not have to. 

\begin{statement}\label{statement-1}
The set $\argmin_{\theta\in\mathbb{R}^m} \mathcal{L}_N(\theta)$ is not empty.
\end{statement}
The proof is straightforward (see Appendix~\ref{appendix:statement-1}).

\section{Stochastic smoothing}\label{sec:smoothing}

\subsection{Smoothing of Scores}

The discrete structure of ranking loss functions prevents their effective optimization. Hence, some smoothing is needed and a natural approach for this is mollification~\cite{ErmolievNorkin, 10.2307/1999442}, i.e., adding randomness to parameters. We refer to Appendix~\ref{appendix:mollification} for the formal definition and the reasons why this approach is not applicable in our case.

Thus, instead of acting on the level of parameters $\theta$, we act on the level of scores $z$: $L_{\xi}^{\pi}(z, \sigma):=\mathbb{E}L(z+\sigma\varepsilon, \xi)$, where $\varepsilon$ is a random variable with p.d.f. $\pi(z)$.
We multiply the noise by $\sigma$ to preserve Scalar-freeness in a sense that $L^{\pi}_\xi(\lambda z, \lambda \sigma) = L_\xi^{\pi}(z, \sigma)$ for any $\lambda > 0$.

In the linear case $f(\theta) = \Phi\,\theta$, if $\mathrm{rk}\,\Phi = n$, it is not hard to show the convergence of minimizers. 
However, in general, we cannot assume $\mathrm{rk}\,\Phi = n$. 
In particular, this property is violated in the presence of ties that \textit{always} occur in gradient boosting due to the discrete nature of decision trees. As a result, there is a \emph{smoothing bias} that alters the set of minimizers. 

\subsection{Simple Example of Smoothing Bias}\label{sec:bias_example}

Within this section, assume for simplicity that we are dealing with one function $L(z) := L(z, \xi):\mathbb{R}^n\rightarrow \mathbb{R}$ for some arbitrary fixed $n$ and $\xi \in \Xi_n$. Let $\Phi = \Phi_\xi\in \mathbb{R}^{m\times n}$ and $\mathcal{L}(\theta) := L(\Phi\,\theta)$. 
To clearly see how a \emph{smoothing bias} can be introduced, consider the case when $\mathrm{im}(\Phi) \subset \mathbb{R}^n \,\backslash\, {\cup_{i=1}^k U_i}$, where $U_i$ are from the Discreteness on subspaces assumption for $V = \mathbb{R}^n$. Denote by $c_1, \ldots, c_k\in \mathbb{R}$ the values of $L(z)$ on the corresponding subsets $U_i$. Consider the functions $\mathcal{L}(\theta)$ and 
$\mathcal{L}^{\pi}(\theta) := \lim_{\sigma\rightarrow 0_+} \mathbb{E}_{\varepsilon\sim \pi}L(\Phi\theta + \sigma \varepsilon)$.

The value of $\mathcal{L}^{\pi}(\theta)$ is fully determined by $\pi$, $c_1, \ldots, c_k$ and the subsets $U_1, \ldots, U_k$ in the following way: $\mathcal{L}^{\pi}(\theta) = \sum_i \alpha_i c_i$ with $$\alpha_i = \alpha_i(\pi, \theta, U_1, \ldots, U_k) = \lim_{\sigma\rightarrow 0_+} \mathbb{P}(\Phi\,\theta + \sigma \varepsilon \in U_i).$$ 

In contrast, the value $\mathcal{L}(\theta)$ depends on the values $c_1, \ldots, c_k$ much weaker: for fixed $\theta$, consider the values $c_1', \ldots, c_{k'}'$ that correspond to $U_i$ such that $\Phi\theta \in \overline{U_i}$, then the only limitation we have is $\min c_{i}' < \mathcal{L}(\theta)\le \max c_{i}'$ (this is required by Jumps regularity), which clearly allows more flexibility than the linear combination defined above.

In LTR, the issue of smoothing bias is connected to the problems of ties: the situations when $z_i = z_j$ and $r_i\ne r_j$. 

\subsection{Consistent Smoothing}\label{sec:pi_resolution}

\begin{definition}
\label{pi-RQF}
We say that the family of distributions $\pi_{\xi}(z): \coprod_{n > 0} \mathbb{R}^n \times \Xi_n  \rightarrow \mathbb{R}_{+}$ is a \emph{consistent smoothing} for $L(z, \xi) \in \mathcal{R}_0$ and for the model $f_\xi$ iff for each $n > 0$, $\xi \in \Xi_n$ the following limit holds almost surely locally uniform in $\theta$: 

$$L(f_\xi (\theta), \xi) = \lim_{\sigma\rightarrow 0_+} L_{\xi}^{\pi}(f_\xi(\theta),\sigma).$$ 
\end{definition}
If $\pi$ is smooth enough and consistent, then the function $\mathcal{L}^{\pi}_N(\theta, \sigma) := \frac{1}{N}\sum_{i=1}^N L_{\xi_i}^{\pi}(f_{\xi_i}(\theta), \sigma)$ is also smooth and almost surely locally uniformly approximates the discrete loss $\mathcal{L}_N(\theta)$ as $\sigma \rightarrow 0_+$.

To optimize ranking losses, it is important to find a consistent smoothing $\pi$ for functions in $\mathcal{R}_1$.
Fortunately, we can do this with an arbitrary precision by shifting the normal distribution by $-\mu r$ for large enough $\mu$. Relevance monotonicity and s.u.s.c. imply the following pointwise limit: 
\begin{multline*}
\lim_{\mu\rightarrow\infty}\lim_{\sigma\rightarrow 0_+} \mathbb{E}_{\varepsilon \sim \mathcal{N}(-\mu r, I_n)}L(z +\sigma\varepsilon, \xi) \\ 
= \lim_{\mu\rightarrow \infty}\lim_{\sigma\rightarrow 0_+} \mathbb{E}_{\varepsilon \sim \mathcal{N}(\mathbb{0}_n, I_n)}L(z - \sigma\mu r + \sigma \varepsilon, \xi)
= L(z, r)  \,.
\end{multline*}
This can be strengthened to the following theorem, which is proven in Appendix~\ref{appendix:theorem-1}.
\begin{theorem}\label{theorem-1}
\label{pi-resolution-of-R1}
$\pi_{\xi, \mu} = \mathcal{N}(-\mu r, I_n)$ is a consistent smoothing for $\mathcal{R}_1$ as $\mu \rightarrow \infty$. 
Formally, $\forall \theta$ except zero measure $\exists \, \delta > 0$ $\forall \, \epsilon > 0$ $\exists \, \mu > 0$ $\exists\, \sigma_0 > 0$ such that $\forall \sigma 
\in (0, \sigma_0)$ and $\forall\theta':\|\theta - \theta'\| < \delta$ holds $|L_\xi^{\pi}(f_\xi(\theta'), \sigma) - L(f_\xi(\theta'), \xi)| < \epsilon$.
\end{theorem}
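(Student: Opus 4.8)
The plan is to reduce the claimed locally uniform convergence to a pairwise probabilistic analysis of how the shifted Gaussian noise resolves ties, and then to decouple the two limits: a large shift $\mu$ handles the \emph{permanent} ties forced by $\Phi_\xi$, while a small scale $\sigma$ controls the \emph{resolvable} coordinates.

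First I would fix a generic $\theta$ and write $z=\Phi_\xi\theta$, $V=\mathrm{im}\,\Phi_\xi$. Call a pair $(i,j)$ \emph{permanent} if rows $i,j$ of $\Phi_\xi$ coincide (so $(\Phi_\xi\theta')_i=(\Phi_\xi\theta')_j$ for every $\theta'$) and \emph{resolvable} otherwise. The zero-measure set I would exclude is the finite union, over resolvable pairs with $r_i\ne r_j$, of the proper subspaces $\{\theta:(\Phi_\xi\theta)_i=(\Phi_\xi\theta)_j\}$. For $\theta$ off this set the signs of $(\Phi_\xi\theta')_i-(\Phi_\xi\theta')_j$ on all resolvable $r_i\ne r_j$ pairs are locally constant, so I can fix $\delta>0$ with $\|\Phi_\xi\|\,\delta$ small enough that these signs stay constant on $\{\theta':\|\theta'-\theta\|<\delta\}$ and the corresponding gaps stay bounded below by some $g_{\min}>0$. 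Using the pairwise-decision-boundary and discreteness properties together with s.u.s.c., the discrete loss $L(f_\xi(\theta'),\xi)\equiv c$ is constant on this ball, and this $c$ is the target value.

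Next I would reparametrise the noise: with $\varepsilon\sim\mathcal{N}(-\mu r,I_n)$ and $\eta\sim\mathcal{N}(\mathbb{0}_n,I_n)$ the smoothed loss is $L_\xi^{\pi}(z',\sigma)=\mathbb{E}_\eta L(w,\xi)$ where $w=z'-\sigma\mu r+\sigma\eta$. By Uniform boundedness, $|L_\xi^{\pi}(z',\sigma)-c|\le 2l\,\mathbb{P}_\eta\big(L(w,\xi)\ne c\big)$, so it suffices to bound the probability that $w$ lands in a region of value $\ne c$. Since $L(w,\xi)$ is fixed by the signs $\mathrm{sign}(w_i-w_j)$, I analyse them pair by pair. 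For a resolvable pair, $w_i-w_j=(z'_i-z'_j)-\sigma\mu(r_i-r_j)+\sigma(\eta_i-\eta_j)$; as $|z'_i-z'_j|\ge g_{\min}$ and $\sigma\mu\to0$ once $\mu$ is fixed, the flip probability is at most $\mathbb{P}(|\mathcal{N}(0,2)|>g_{\min}/\sigma-\mu|r_i-r_j|)$, which tends to $0$ as $\sigma\to0_+$ uniformly in $\theta'$. For a permanent pair $z'_i=z'_j$, so $\mathrm{sign}(w_i-w_j)=\mathrm{sign}(-\mu(r_i-r_j)+(\eta_i-\eta_j))$ is independent of both $\sigma$ and $\theta'$; when $r_i\ne r_j$ the probability it disagrees with $\mathrm{sign}(-(r_i-r_j))$ equals $\mathbb{P}(\mathcal{N}(0,2)>\mu|r_i-r_j|)\to0$ as $\mu\to\infty$. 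Relevance monotonicity identifies $\mathrm{sign}(-(r_i-r_j))$ as exactly the worst-case orientation defining $c$, so off these bad events $w$ induces the worst-permutation ordering and hence $L(w,\xi)=c$.

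Finally I would choose the parameters in the order dictated by the quantifiers: given $\epsilon$, first pick $\mu$ large so that the permanent $r_i\ne r_j$ pairs contribute total probability $<\epsilon/(4l)$, then pick $\sigma_0$ so that for $\sigma\in(0,\sigma_0)$ the resolvable pairs contribute $<\epsilon/(4l)$; a union bound over the finitely many pairs yields $\mathbb{P}_\eta(L(w,\xi)\ne c)<\epsilon/(2l)$ and thus $|L_\xi^{\pi}(f_\xi(\theta'),\sigma)-c|<\epsilon$ for all such $\sigma$ and all $\theta'$ in the $\delta$-ball. The main obstacle is precisely the coupling of the two limits: one must check that the permanent-tie resolution probability is genuinely independent of $\sigma$ (so the ``large $\mu$'' step can precede, and hold uniformly over, the ``small $\sigma$'' step), and that crossing an $r_i=r_j$ boundary never changes $L$, so that the randomly broken equal-relevance ties introduce no residual bias.
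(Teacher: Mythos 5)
Your proposal is correct and takes essentially the same route as the paper's proof: the paper defines the open, dense (full-measure) set $U_{\xi}$ of parameters whose only ties are ``permanent'' ones, partitions the coordinates into tied groups, and decouples the limits exactly as you do --- small $\sigma$ with $\sigma\mu\rightarrow 0_+$ preserves the order between groups (your resolvable pairs, uniformly over a neighborhood of $\theta$), while large $\mu$ forces the worst-case order inside each group (your permanent pairs, where the sign of $w_i-w_j$ is indeed $\sigma$-independent) --- with the quantifiers chosen in the same order. The obstacle you flag, that crossing an $r_i=r_j$ boundary must not change $L$, is real but follows from Relevance monotonicity (applied at a boundary point $z$ it makes $L$ constant on a ball around $z-\epsilon r$, which still straddles that boundary because $r_i=r_j$), and the paper's own proof silently relies on the same fact for permanently tied equal-relevance documents, so this is not a gap relative to the paper.
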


By similar arguments, one can show that $\mathcal{N}(\mathbb{0}, I_n)$ is a consistent smoothing for $\mathcal{R}_1^{soft}$. 
Note that in both cases the consistent smoothing is \textit{universal} for the entire class ($\mathcal{R}_1$ of $\mathcal{R}_1^{soft}$), i.e., it is independent from the choice of $f_\xi$.

Thus, LTR problems require non-trivial smoothing to preserve consistency. However, under some restrictive assumptions on the loss and on the model, any smoothing $\pi$ is consistent. 

Recall that $\mathcal{L}_N(\theta) = \frac{1}{N}\sum_{i=1}^N L(\Phi_{\xi_i} \theta, \xi_i)$ and assume that $L(z, \xi) \in \mathcal{R}_0$. 
The following theorem is proven in Appendix~\ref{appendix:theorem-2}.

\begin{theorem}\label{theorem-2}
\label{sufficient-cond-rk}
Consider open and convex subsets $U'_{ij}:=U_{ij}\cap \mathrm{im}\,\Phi_{\xi_i}$. 
If $\forall i$ $\exists j$ s.t. $U'_{ij} \ne \emptyset$ and $\overline{\cup_j U'_{ij}} = \mathrm{im}\,\Phi_{\xi_i}$, then any smoothing $\pi$ 
is consistent for $\mathcal{L}_N(\theta)$.
\end{theorem}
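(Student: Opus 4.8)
The plan is to exploit the fact that, away from a measure-zero set of parameters, the point $\Phi_{\xi_i}\theta$ falls into the \emph{ambient} interior of a region on which $L(\cdot,\xi_i)$ is constant; then full-dimensional smoothing cannot mix in neighboring values, so no bias appears. Since $\mathcal{L}_N$ is a finite average $\frac1N\sum_i L(\Phi_{\xi_i}\theta,\xi_i)$ and consistency (Definition~\ref{pi-RQF}) is a statement about almost-everywhere, locally uniform convergence as $\sigma\to0_+$, it suffices to prove it for a single summand and then intersect the (open, full-measure) good sets over $i=1,\dots,N$. So I would fix one query, write $\Phi=\Phi_{\xi_i}$, $L(z):=L(z,\xi_i)$, and let $U_{ij}$ be the ambient convex open pieces from Discreteness on subspaces with $V=\mathbb{R}^n$, on which $L|_{U_{ij}}\equiv c_{ij}$.

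First I would isolate the good parameters $G:=\Phi^{-1}(\cup_j U_{ij})=\{\theta:\Phi\theta\in\cup_j U_{ij}\}$, which is open since $\Phi$ is continuous and each $U_{ij}$ is open. For $\theta\in G$ the point $\Phi\theta$ lies in some ambient-open $U_{ij}$, so $L(\Phi\theta)=c_{ij}$ and there is an ambient ball $B(\Phi\theta,\rho)\subset U_{ij}$ on which $L\equiv c_{ij}$.

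The main obstacle is to show that the complement of $G$ is Lebesgue-null in $\mathbb{R}^m$, and this is exactly where the hypothesis enters. Consider the traces $U'_{ij}=U_{ij}\cap\mathrm{im}\,\Phi$: they are finitely many (as $k=k(n,\mathbb{R}^n)$ is finite), mutually disjoint, convex, and open in $\mathrm{im}\,\Phi$, and by assumption their union is dense in $\mathrm{im}\,\Phi$. For $x\in\mathrm{im}\,\Phi\setminus\cup_j U'_{ij}$, density together with finiteness forces $x\in\overline{U'_{ij}}$ for some $j$ (otherwise a neighborhood of $x$ would avoid every closed set $\overline{U'_{ij}}$, contradicting density), so the complement is contained in $\cup_j\partial U'_{ij}$, boundaries taken in $\mathrm{im}\,\Phi$. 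Each nonempty $U'_{ij}$ is open in $\mathrm{im}\,\Phi$, hence full-dimensional there, so $\partial U'_{ij}$ is the boundary of a convex body and has measure zero; thus $\mathrm{im}\,\Phi\setminus\cup_j U'_{ij}$ is null. Pulling back through the surjective linear map $\theta\mapsto\Phi\theta$ (decompose $\mathbb{R}^m=\ker\Phi\oplus(\ker\Phi)^\perp$ and use that the restriction to $(\ker\Phi)^\perp$ is a linear isomorphism onto $\mathrm{im}\,\Phi$, then Fubini) shows $\mathbb{R}^m\setminus G$ is Lebesgue-null, so $G$ is the open full-measure set supplying the ``almost surely'' of Definition~\ref{pi-RQF}.

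Finally I would run the concentration estimate. For $\theta\in G$ with $B(\Phi\theta,\rho)\subset U_{ij}$, split the smoothing integral at $\{\sigma\|\varepsilon\|<\rho\}$: on this event $L(\Phi\theta+\sigma\varepsilon)=c_{ij}$, and on its complement $|L|\le l$ by Uniform boundedness, so $|L^{\pi}_{\xi_i}(\Phi\theta,\sigma)-c_{ij}|\le 2l\,\mathbb{P}(\|\varepsilon\|\ge\rho/\sigma)\to0$ as $\sigma\to0_+$, since $\varepsilon$ has a proper density and hence finite norm almost surely. This is pointwise consistency $L^{\pi}_{\xi_i}(\Phi\theta,\sigma)\to L(\Phi\theta)$ on $G$. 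For local uniformity, around a fixed $\theta_0\in G$ I would take $\rho_0$ equal to half the distance from $\Phi\theta_0$ to $\partial U_{ij}$ and a neighborhood $N\ni\theta_0$ small enough that $\|\Phi\theta'-\Phi\theta_0\|<\rho_0$ for $\theta'\in N$; then $B(\Phi\theta',\rho_0)\subset U_{ij}$ uniformly on $N$, making the tail bound $2l\,\mathbb{P}(\|\varepsilon\|\ge\rho_0/\sigma)$ uniform on $N$ and, by a finite subcover, on any compact subset of $G$. Averaging over the $N$ queries on the intersection of their good sets finishes the argument. The only genuinely delicate step is the null-set claim for $\mathrm{im}\,\Phi\setminus\cup_jU'_{ij}$; everything after it is the routine bounded-convergence tail estimate.
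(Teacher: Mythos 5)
Your proposal is correct and follows essentially the same route as the paper's proof: for almost every $\theta$ the point $\Phi_{\xi_i}\theta$ lands in an ambient constancy region $U_{ij_i}$, and local constancy of $L(\cdot,\xi_i)$ there makes the smoothed loss converge locally uniformly as $\sigma\to 0_+$. The only difference is that you rigorously justify the measure-zero claim (via boundaries of convex traces being null and the Fubini pullback through $\Phi$) and the tail estimate, both of which the paper dismisses with ``clearly.''
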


In early literature on LTR, all authors used such conditions implicitly by assuming that scores for all items are different. In contrast, we do not use this assumption as it never holds in practice (e.g., when two documents have equal features). As a result, all existing LTR approaches suffer from a smoothing bias. In contrast, for the LSO problem, any smoothing is consistent, as we discuss in Appendix~\ref{appendix:smoothing-lso}.

\subsection{Scale-Free Acceleration}

It is intuitively clear that for a scale-free function it is better to have a scalar-free approximation. However, for each $\lambda > 0$ we have 
$L_\xi^{\pi}(\lambda z, \sigma) = L_\xi^{\pi}(z, \lambda^{-1}\sigma)$, i.e., the smoothed function is no longer scale-free. 
To enforce scale-freeness, we take a vector $z'$ with $\|z'\|_2 > 0$ and define
$${L}_\xi^{\pi}(z, \sigma|z') 
:= L^{\pi}_\xi\left(z, \frac{\|z\|_2}{\|z'\|_2}\sigma\right).$$
We refer to such smoothing as \emph{Scale-Free Acceleration} (SFA). The obtained function is indeed  scale-free: ${L}_\xi^{\pi}(\lambda z, \sigma|z') \equiv {L}_\xi^{\pi}(z, \sigma|z')$ for any $\lambda > 0$.

Let $\widehat{\sigma}(z) := \frac{\|z\|_2}{\|z'\|_2}\sigma$. In our optimization, we will be interested only in the case when $z' = z_t$ is the vector of scores obtained on $t$-th iteration of the optimization algorithm. 
So, we have $\widehat{\sigma}(z_t) = \sigma$ and SFA \textit{does not} change the scale $\sigma$.

One can imagine a sphere of radius $R = \|z'\|_2$, where we restrict $L^{\pi}_\xi(z, \sigma)$ and homogenize it along the rays from the origin to infinity to obtain a scalar-free function.

\subsection{Smoothing Properties}

Finally, let us discuss regularity assumptions for smoothing on which our optimization method relies. Consider a family of distributions with p.d.f.~$\pi_{\xi}(z)$ with $\xi \in \Xi_n$ for some $n > 0$, $z \in \mathbb{R}^n$. 
We require the following properties:
\begin{itemize}
    \item \textbf{Continuous differentiability:} $\pi_{\xi}(z)$ is $C^{(1)}(\mathbb{R}^n)$, i.e., is differentiable with a continuous derivative.
    \item \textbf{Uniformly bounded derivative:} $\forall n \in \mathbb{N}$, $\forall\xi \in \Xi_n$ we have $\|\nabla_z \pi_{\xi}\|_2 = \mathcal{O}(1)$  uniformly in $z \in \mathbb{R}^n$. 
    \item \textbf{Derivative decay:} $\forall n \in \mathbb{N}$ we have $\|\nabla_z \pi_{\xi}\|_2 = \mathcal{O}(\|z\|_2^{-n-2})$ as $\|z\|_2 \rightarrow \infty$. 
    \item \textbf{Tractable conditional expectations:} conditional densities $\pi_{\xi}^j (z_j) := \pi_{\xi}(z_j|z_{\backslash j})$ are easy to compute.\footnote{We do not use the log-derivative trick, so we do not care about the ability to compute $\frac{\mathrm{d}}{\mathrm{d}z_j}\pi_{\xi}^j (z_j)$ and $\frac{\mathrm{d}}{\mathrm{d}z_j}\log\pi_{\xi}^j (z_j)$, our gradient estimates require only computation of $\pi_{\xi}^j (z_j)$.}
\end{itemize}
 
Clearly, $\mathcal{N}(-\mu r, I_N)$ satisfies these assumptions $\forall \mu \ge 0$. 

\section{Coordinate Conditional Sampling}\label{sec:ccs}

\subsection{Gradient Estimate}\label{sec:gradient-estimation}

In the previous section, we required  the ability to easily compute $\pi_{\xi}^j (z_j) = \pi_{\xi}(z_j|z_{\backslash j})$. This property allows us to do the following trick: we decompose $\pi_{\xi}(z)=\pi_{\xi}^j (z_j)\pi_{\xi}^{\backslash j}(z_{\backslash j})$ with $\pi_{\xi}^{\backslash j}(z_{\backslash j})$ being the marginal distribution for $z_{\backslash j}$. Then, we can represent $L^{\pi}_\xi(z, \sigma)=L_\xi^{\pi} \ast \pi_\xi^{j} \ast \pi_\xi^{\backslash j}$. Note that the convolution is an associative operation that commutes with differentiation and, henceforth,
$$\frac{\partial}{\partial z_j} L_\xi^{\pi}(z, \sigma) = \left(\frac{\partial}{\partial z_j}L_\xi^{\pi}\ast \pi_\xi^{j}\right)\ast \pi_\xi^{\backslash j}.$$
Note that we differentiate by $z_j$ the convolution by the same $z_j$. So, if we want to estimate the gradient unbiasedly, we need to sample $\varepsilon_{\backslash j} \sim \pi_{\xi}^{\backslash j}$ and then compute exactly $\Big(\frac{\partial}{\partial z_j}L^{\pi}_\xi\ast \pi_\xi^{j}\Big)\big((z_j, z_{\backslash j} + \sigma\varepsilon_{\backslash j})\big)$. The resulting estimate would be unbiased by construction. The following lemma suggests how to deal with $\frac{\partial}{\partial z_j}L^{\pi}_\xi \ast \pi_\xi^{j}$.

\begin{lemma}
\label{lemma=finite-jumps}
The function $l_j(z_j) := L((z_j, z_{\backslash j}), \xi):\mathbb{R}\rightarrow\mathbb{R}$ for all $z$ except zero measure has at most $k' \le k(n, \mathbb{R}^n)-1$ ($k$ is from the Discreteness on subspaces assumption) breaking points $b_1, \ldots, b_{k'}$ (possibly depending on $z_{\backslash j}$ and $\xi$) and can be represented as:
$$l_j(z_j) = \sum_{s=1}^{k'} \Delta l_j(b_s) \mathbb{1}_{\{z_j \le b_s\}} + \mathrm{const}(z_{\backslash j},\xi),$$
$$\Delta l_j(b_s) := \lim_{\epsilon\rightarrow 0_+} l_j(b_s + \epsilon) - l_j(b_s - \epsilon).$$
\end{lemma}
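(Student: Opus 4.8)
The plan is to freeze the coordinates $z_{\backslash j}$ and analyze the restriction of $L$ to the line $\ell=\{(t,z_{\backslash j}):t\in\mathbb{R}\}$, exploiting the convexity of the pieces $U_i$ from the Discreteness on subspaces assumption. First I would note that since each $U_i$ is convex and open, the slice $U_i\cap\ell$ is a (possibly empty) open interval, and these intervals are pairwise disjoint because the $U_i$ are. Hence the line meets at most $k=k(n,\mathbb{R}^n)$ of them, and on each such interval $L$ equals the constant $\mathrm{const}(i,\xi,\mathbb{R}^n)$ of the corresponding piece; so every breaking point of $l_j$ must lie in the complement $\ell\setminus\bigcup_i(U_i\cap\ell)$.

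The crux is to show that this complement is finite for almost every $z_{\backslash j}$. I would first prove that the exceptional set $N:=\mathbb{R}^n\setminus\bigcup_i U_i$ is Lebesgue-null. Because the union is finite, closure distributes, $\overline{\bigcup_i U_i}=\bigcup_i\overline{U_i}$, so the dense-union hypothesis $\overline{\bigcup_i U_i}=\mathbb{R}^n$ forces every point of $N$ into some $\overline{U_i}\setminus U_i=\partial U_i$; thus $N\subseteq\bigcup_i\partial U_i$. Since the topological boundary of a convex set has Lebesgue measure zero, $N$ is null, and by Fubini the one-dimensional slice $N\cap\ell$ has length zero for almost every $z_{\backslash j}$. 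For such $z_{\backslash j}$ the at-most-$k$ open intervals $U_i\cap\ell$ therefore have dense union in $\ell$.

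Finitely many disjoint open intervals whose union is dense in $\mathbb{R}$ must tile it as $(-\infty,b_1),(b_1,b_2),\ldots,(b_{k'},+\infty)$ with at most $k$ pieces, so the complement is exactly the finite set $\{b_1<\cdots<b_{k'}\}$ of shared endpoints, which yields the bound $k'\le k-1$. On each piece $l_j$ is constant, hence $l_j$ is a step function whose only discontinuities are the jumps at the $b_s$. The claimed representation then follows by telescoping the successive constant levels: the jump at $b_s$ is exactly $\Delta l_j(b_s)$, and summing the indicators against these jumps reconstructs $l_j$ up to the additive constant equal to its value on the extremal interval, which is the asserted $\mathrm{const}(z_{\backslash j},\xi)$.

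The step I expect to be the main obstacle is the nullity of $N$ together with the Fubini slicing, since the Discreteness assumption only supplies topological density of $\bigcup_i U_i$ and says nothing a priori about the measure of its complement; the inclusion $N\subseteq\bigcup_i\partial U_i$, obtained from the finite-union closure identity, is precisely what upgrades mere density to the almost-everywhere finiteness of the breaking points. Everything afterward — the tiling count and the telescoping identity — is routine bookkeeping, modulo fixing the indicator convention so that the coefficients read off exactly as $\Delta l_j(b_s)$.
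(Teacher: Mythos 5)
Your proof follows the same route as the paper's: restrict to the line $H=\{(t,z_{\backslash j}):t\in\mathbb{R}\}$, observe that each slice $U_i\cap H$ is an open interval by openness and convexity and that these intervals are disjoint, and take the finitely many shared endpoints as the breaking points $b_1,\ldots,b_{k'}$, yielding the step-function representation by telescoping. The only difference is one of rigor: where the paper simply writes ``by ignoring sets of zero measure, we can assume'' the slices have dense union in $H$, you actually justify this step via the inclusion $\mathbb{R}^n\setminus\bigcup_i U_i\subseteq\bigcup_i\partial U_i$, the nullity of boundaries of convex sets, and Fubini slicing --- so your argument is, if anything, a more complete version of the paper's own proof (both share the same minor sign/indicator convention issue in the final formula, which you flag).
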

All results of this section are proven in Appendix~\ref{appendix:CCS}. 

Based on the above lemma, we prove the following theorem.

\begin{theorem}\label{theorem-3}
The derivative $\frac{\partial}{\partial z_j}L^{\pi}_\xi(z, \sigma)$ is equal to:
\[
    -\sigma^{-1} \cdot\mathbb{E}_{\varepsilon_{\backslash j}\sim \pi_\xi^{\backslash j}} \sum_{s=1}^{k'} \Delta l_j(b_s) \pi^{j}_\xi(\sigma^{-1}(b_s - z_j)),
\]
where $k'$ and $b_s = b_s(z_{\backslash j} + \sigma\varepsilon_{\backslash j})$ are from Lemma~\ref{lemma=finite-jumps}.
\end{theorem}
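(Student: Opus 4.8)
The plan is to isolate the $j$-th coordinate, push the derivative $\partial/\partial z_j$ through the outer expectation over the remaining coordinates, and then evaluate the resulting one-dimensional integral exactly using the finite-jump structure from Lemma~\ref{lemma=finite-jumps}. Concretely, I would first factor the density as $\pi_\xi(\varepsilon) = \pi_\xi^{j}(\varepsilon_j \mid \varepsilon_{\backslash j})\,\pi_\xi^{\backslash j}(\varepsilon_{\backslash j})$ and write
\[
L_\xi^{\pi}(z,\sigma) = \mathbb{E}_{\varepsilon_{\backslash j}\sim\pi_\xi^{\backslash j}}\left[\int_{\mathbb{R}} L\big((z_j+\sigma\varepsilon_j,\, z_{\backslash j}+\sigma\varepsilon_{\backslash j}),\xi\big)\,\pi_\xi^{j}(\varepsilon_j\mid\varepsilon_{\backslash j})\,d\varepsilon_j\right].
\]
Fixing $\varepsilon_{\backslash j}$ and writing $l_j(t) = L\big((t,\, z_{\backslash j}+\sigma\varepsilon_{\backslash j}),\xi\big)$, the bracketed inner integral is a one-dimensional (scaled) convolution of $l_j$ with $\pi^j_\xi$, and the key feature is that its only $z_j$-dependence sits inside the smooth density rather than inside the discontinuous loss; this is precisely the structure exploited in Section~\ref{sec:gradient-estimation}.

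For the inner integral I would substitute the representation from Lemma~\ref{lemma=finite-jumps}, $l_j(t) = \sum_{s=1}^{k'}\Delta l_j(b_s)\,\mathbb{1}_{\{t\le b_s\}} + \mathrm{const}$, change variables $t = z_j + \sigma\varepsilon_j$, and note that $\int_{\mathbb{R}}\mathbb{1}_{\{z_j+\sigma\varepsilon_j\le b_s\}}\,\pi_\xi^{j}(\varepsilon_j\mid\varepsilon_{\backslash j})\,d\varepsilon_j = P_\xi^{j}\big(\sigma^{-1}(b_s-z_j)\big)$, where $P^j_\xi$ is the conditional c.d.f.\ associated with $\pi^j_\xi$. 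Since the breaking points $b_s = b_s(z_{\backslash j}+\sigma\varepsilon_{\backslash j})$ and the additive constant do not depend on $z_j$, differentiating term by term and using $\frac{d}{dz_j}P_\xi^{j}(\sigma^{-1}(b_s-z_j)) = -\sigma^{-1}\pi_\xi^{j}(\sigma^{-1}(b_s-z_j))$ by the chain rule immediately produces the factor $-\sigma^{-1}$ and the density evaluated at $\sigma^{-1}(b_s-z_j)$. This is exactly the \emph{partial integration} step: equivalently one integrates by parts, moving $\partial/\partial z_j$ onto $l_j$, whose distributional derivative is supported on the jumps $\{b_s\}$, and the boundary terms vanish because $L$ is uniformly bounded and $\pi^j_\xi$ decays at infinity. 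Reinstating the outer expectation over $\varepsilon_{\backslash j}$ then gives the claimed formula.

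The two steps that need genuine justification — and where I expect the main obstacle to lie — are interchanging $\partial/\partial z_j$ with the outer expectation and differentiating under the one-dimensional integral despite the discontinuity of $L$. The discontinuity is handled precisely by Lemma~\ref{lemma=finite-jumps}: replacing $l_j$ by its finite step representation confines all non-smoothness to the at most $k-1$ points $b_s$, so after integrating against the density the integrand becomes the $C^{(1)}$ c.d.f.\ $P^j_\xi$ and ordinary differentiation applies for almost every $z$ (the measure-zero exceptional set in the Lemma is harmless under the outer integral). For the interchange with the expectation I would invoke dominated convergence: the regularity assumptions on $\pi_\xi$ (continuous differentiability, uniformly bounded derivative, and derivative decay) together with $|\Delta l_j(b_s)|\le 2l$ and $k'\le k-1$ bound the differentiated inner integral uniformly by a $\pi^{\backslash j}_\xi$-integrable constant, legitimizing the exchange. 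Collecting the pieces yields exactly $-\sigma^{-1}\,\mathbb{E}_{\varepsilon_{\backslash j}\sim\pi_\xi^{\backslash j}}\sum_{s=1}^{k'}\Delta l_j(b_s)\,\pi^{j}_\xi(\sigma^{-1}(b_s-z_j))$.
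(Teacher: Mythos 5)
Your proposal is correct and follows essentially the same route as the paper: both use the step-function representation of $l_j$ from Lemma~\ref{lemma=finite-jumps}, recognize that convolving each indicator $\mathbb{1}_{\{z_j\le b_s\}}$ with $\pi^j_\xi$ yields a smooth conditional c.d.f.\ whose $z_j$-derivative is $-\sigma^{-1}\pi^j_\xi(\sigma^{-1}(b_s-z_j))$, and then reinstate the outer expectation over $\varepsilon_{\backslash j}$. Your explicit dominated-convergence justification for exchanging $\partial/\partial z_j$ with the outer expectation is a point the paper handles only implicitly (via "convolution commutes with differentiation" in Section~\ref{sec:gradient-estimation}), so your write-up is, if anything, slightly more careful on that step.
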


\begin{corollary}\label{cor-1}
For LTR losses, the above formula becomes:
\begin{multline*}
    \frac{\partial}{\partial z_j}L^{\pi}_\xi(z, \sigma) =
    -\sigma^{-1} \cdot \\ \cdot\mathbb{E}_{\varepsilon_{\backslash j}\sim \pi_\xi^{\backslash j}} \sum_{s=1}^{n} \Delta l_j(z_s + \sigma\varepsilon_s) \pi^{j}_\xi(\sigma^{-1}(z_s - z_j) + \varepsilon_s).
\end{multline*}

\end{corollary}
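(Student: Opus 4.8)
The plan is to specialize the general formula of Theorem~\ref{theorem-3} by using the \textbf{Pairwise decision boundary} property from Definition~\ref{RQF}, which pins down the breaking points $b_s$ explicitly in the LTR case. First I would fix the noise-shifted coordinates $z_{\backslash j} + \sigma\varepsilon_{\backslash j}$ and examine the one-dimensional section $l_j(w) := L((w, z_{\backslash j} + \sigma\varepsilon_{\backslash j}), \xi)$ from Lemma~\ref{lemma=finite-jumps}. For a ranking loss the partition $\{U_i\}$ is cut out by the hyperplanes $\{z : z_i - z_j = 0\}$, so the only way the section $l_j(w)$ can change value is when the moving coordinate $w$ passes through one of the fixed values $(z_s + \sigma\varepsilon_s)_{s \ne j}$. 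Hence the breaking points of $l_j$ are exactly $b_s = z_s + \sigma\varepsilon_s$ for $s \ne j$, and in particular $k' \le n-1$, which is consistent with the bound in Lemma~\ref{lemma=finite-jumps}.

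Second, I would substitute these explicit breaking points into the formula of Theorem~\ref{theorem-3}. The argument of the conditional density becomes $\sigma^{-1}(b_s - z_j) = \sigma^{-1}(z_s + \sigma\varepsilon_s - z_j) = \sigma^{-1}(z_s - z_j) + \varepsilon_s$, and the jump size is $\Delta l_j(b_s) = \Delta l_j(z_s + \sigma\varepsilon_s)$, exactly matching the summand in the Corollary. This converts the abstract sum over $s = 1, \ldots, k'$ into a concrete sum over the indices $s \ne j$.

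Finally, I would reconcile the range of summation: the Corollary writes $\sum_{s=1}^n$, whereas the genuine breaking points are indexed by $s \ne j$. The point is that the term $s = j$ corresponds to the ``self-crossing'' $w = z_j + \sigma\varepsilon_j$, which is never a breaking point of $l_j$ (a jump requires $w$ to cross a \emph{different} coordinate), so $\Delta l_j$ there vanishes and the term may be adjoined harmlessly; formally one augments $\varepsilon_{\backslash j}$ with an independent $\varepsilon_j$ that leaves the value unchanged. This yields the stated $\sum_{s=1}^n$.

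The main obstacle I expect is the first step: rigorously arguing that the sections $l_j(w)$ have breaking points \emph{exactly} at the crossings $w = z_s + \sigma\varepsilon_s$ and nowhere else, while handling the measure-zero configurations in which two of the fixed coordinates coincide (so that crossings merge and Lemma~\ref{lemma=finite-jumps} must be invoked on its full-measure domain). One must also confirm that the worst-permutation tie resolution (s.u.s.c.) introduces no spurious jumps off these hyperplanes, which follows because s.u.s.c.\ only fixes the value \emph{at} a breaking point and leaves the open cells $U_i$ --- on which $L$ is constant --- untouched.
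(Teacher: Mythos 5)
Your proposal is correct and follows essentially the same route as the paper: both invoke the Pairwise decision boundary property to pin the breaking points of the section $l_j$ to the (noised) coordinate values, substitute them into Theorem~\ref{theorem-3}, and extend the sum to all $s=1,\ldots,n$ by noting that $\Delta l_j$ vanishes at any coordinate that is not an actual breaking point (in particular at the $s=j$ ``self-crossing''). Your treatment is in fact somewhat more careful than the paper's terse argument, notably in handling the $s=j$ term and the measure-zero coincidences of coordinates.
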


Uniform boundedness of $\Delta l_j$ and $\pi$ implies the following. 
\begin{statement}
The estimate is uniformly bounded by $\mathcal{O}(\sigma^{-1})$.
\end{statement}

Proceeding analogously with each coordinate $j \in \{1, \ldots, n\}$, we obtain an unbiased estimate of $\nabla_z L_\xi^{\pi}(z, \sigma)$ that is uniformly bounded, in contrast to the classic estimate $\sigma^{-1}(L(z+\sigma\varepsilon)-L(z))\varepsilon$~\cite{Nesterov2017RandomGM} obtained by the log-derivative trick for the normal distribution that is also known as REINFORCE~\cite{reinforce}. Uniform boundedness is crucial since without it we would not be able to claim global convergence. We call such estimate \emph{Conditional Coordinate Sampling} (CCS) and denote it by $\widehat{\nabla}_{CC}L^{\pi}_\xi(z, \sigma)$.

Note that for each coordinate when estimating $\widehat{\nabla}_{CC}L_\xi^{\pi}(z, \sigma)$ we use the shared noise vector $\varepsilon \sim \pi_{\xi}$, i.e., the components of the gradient can have non-trivial covariation, but due to the uniform boundness the covariation is also uniformly bounded by $\mathcal{O}(\sigma^{-1})$.

Finally, let us discuss the complexity of computing $\widehat{\nabla}_{CC} L_{\xi}^{\pi}(z, \sigma)$. The following result follows from Appendix~\ref{algos}.
\begin{statement}
The estimate $\widehat{\nabla}_{CC} L_{\xi}^{\pi}(z, \sigma)$ can be computed in:
\begin{itemize}
\item $\mathcal{O}((k+\log n)n)$ operations and $\mathcal{O}(n)$ additional memory for $\mathrm{(N)DCG}@k$ and $\mathrm{ERR}@k$.
\item $\mathcal{O}(n\log n)$ operations and $\mathcal{O}(1)$ memory for $\mathrm{MRR}$.

\end{itemize}
\end{statement}

\subsection{SFA Gradient Estimate}
It is not hard to generalize CCS to SFA. The following theorem holds.

\begin{theorem}\label{theorem-4}
For $\widehat{\sigma}(z) = \Big(\frac{\|z\|_2}{\|z'\|_2}\Big)\sigma$ at $z' = z$ we have:
$$\nabla_z L^{\pi}(z, \widehat{\sigma}(z)) = \nabla_z L^{\pi}_\xi -\Big\langle \nabla_z L_\xi^{\pi},  \frac{z}{\|z\|_2}\Big\rangle_2 \frac{z}{\|z\|_2}.$$
\end{theorem}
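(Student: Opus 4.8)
The plan is to treat this as a chain-rule computation in which the dependence of the smoothed loss on $z$ enters in two ways: directly through the first argument of $L^{\pi}_\xi(\cdot,\cdot)$, and indirectly through the scale $\widehat{\sigma}(z) = \frac{\|z\|_2}{\|z'\|_2}\sigma$, which itself depends on $z$. Holding the reference vector $z'$ fixed and taking $z$ as the free variable, the total derivative is
\[
\nabla_z L^{\pi}_\xi(z, \widehat{\sigma}(z)) = (\nabla_z L^{\pi}_\xi)(z, \widehat{\sigma}(z)) + \frac{\partial L^{\pi}_\xi}{\partial \sigma}(z, \widehat{\sigma}(z))\,\nabla_z \widehat{\sigma}(z),
\]
where the first term is the partial gradient in the first argument and the second accounts for the induced change of scale. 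Differentiating $\|z\|_2$ gives $\nabla_z \widehat{\sigma}(z) = \frac{\sigma}{\|z'\|_2}\frac{z}{\|z\|_2}$. Evaluating at $z' = z$ makes $\widehat{\sigma}(z) = \sigma$ and $\nabla_z\widehat{\sigma}(z) = \sigma\,z/\|z\|_2^2$, so every quantity is expressed at the unaccelerated scale $\sigma$.

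The key step, and the main place where the structure of the problem does the work, is to eliminate the $\sigma$-derivative using joint homogeneity. Recall the scalar-freeness of the smoothing, $L^{\pi}_\xi(\lambda z, \lambda\sigma) = L^{\pi}_\xi(z, \sigma)$ for all $\lambda > 0$, i.e.\ $L^{\pi}_\xi$ is homogeneous of degree zero in $(z,\sigma)$ jointly. Differentiating both sides in $\lambda$ and setting $\lambda = 1$ yields Euler's identity
\[
\langle \nabla_z L^{\pi}_\xi(z, \sigma), z\rangle_2 + \sigma\,\frac{\partial L^{\pi}_\xi}{\partial \sigma}(z, \sigma) = 0,
\]
hence $\frac{\partial L^{\pi}_\xi}{\partial\sigma}(z, \sigma) = -\sigma^{-1}\langle \nabla_z L^{\pi}_\xi, z\rangle_2$. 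This relation, which converts scale sensitivity into a radial projection of the spatial gradient, is exactly what collapses the correction term into the orthogonal form of the statement; I regard recognizing this homogeneity identity as the crux rather than any genuine obstacle.

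Substituting the Euler identity together with $\nabla_z\widehat{\sigma}(z)|_{z'=z} = \sigma z/\|z\|_2^2$ into the chain-rule expression cancels the factor $\sigma$ and gives
\[
\nabla_z L^{\pi}_\xi(z, \widehat{\sigma}(z))\big|_{z'=z} = \nabla_z L^{\pi}_\xi - \langle \nabla_z L^{\pi}_\xi, z\rangle_2 \frac{z}{\|z\|_2^2} = \nabla_z L^{\pi}_\xi - \Big\langle \nabla_z L^{\pi}_\xi, \frac{z}{\|z\|_2}\Big\rangle_2 \frac{z}{\|z\|_2},
\]
which is the claimed formula. Geometrically the right-hand side is the orthogonal projection of $\nabla_z L^{\pi}_\xi$ onto the tangent space of the sphere at $z$, i.e.\ onto $z^{\perp}$, which is precisely what one expects of the gradient of a scale-free quantity. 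The only regularity needed along the way is that $L^{\pi}_\xi(z, \sigma) = \mathbb{E}L(z + \sigma\varepsilon, \xi)$ is jointly $C^{(1)}$ in $(z, \sigma)$ for $\sigma > 0$ and that differentiation may be passed under the expectation (equivalently, under the convolution); both follow from the continuous differentiability and derivative-decay assumptions on $\pi_\xi$ stated in the Smoothing Properties subsection.
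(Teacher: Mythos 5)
Your proof is correct and matches the paper's argument essentially step for step: the paper's Lemma~\ref{lemma:sigma-der} is obtained exactly by your Euler-identity computation (differentiating $L^{\pi}_\xi(\alpha z,\alpha\sigma)\equiv L^{\pi}_\xi(z,\sigma)$ in $\alpha$ at $\alpha=1$), and the paper's Lemma~\ref{scalar-free-differential-equiation} is your chain-rule step, specialized to $\widehat{\sigma}(z)$ at $z'=z$. No meaningful difference in approach or rigor.
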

\begin{corollary}
Unbiased CCS estimate for SFA can be obtained by orthogonalizing $\widehat{\nabla}_{CC}L_\xi^{\pi}(z, \sigma)$ and~$z$.
\end{corollary}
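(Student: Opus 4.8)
The plan is to read the Corollary as an immediate consequence of Theorem~\ref{theorem-4} combined with the unbiasedness of the CCS estimate and the linearity of expectation. Theorem~\ref{theorem-4} already absorbs the full chain-rule contribution coming from the $\widehat{\sigma}(z)$-dependence and tells us that the SFA gradient $\nabla_z L^{\pi}(z, \widehat{\sigma}(z))$ is nothing but the orthogonal projection of the plain smoothed gradient $\nabla_z L_\xi^{\pi}(z, \sigma)$ onto the hyperplane $z^{\perp}$, i.e.\ the application of the (deterministic, at fixed $z$) linear projector $P_z := I_n - \frac{z z^{T}}{\|z\|_2^2}$. Since the previous subsection established that $\widehat{\nabla}_{CC} L_\xi^{\pi}(z, \sigma)$ is an unbiased estimate of $\nabla_z L_\xi^{\pi}(z, \sigma)$, the natural candidate is $P_z\,\widehat{\nabla}_{CC} L_\xi^{\pi}(z, \sigma)$, which is exactly the orthogonalization of the CCS estimate against $z$.

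First I would write the orthogonalization explicitly as
$$\widehat{g} := \widehat{\nabla}_{CC} L_\xi^{\pi}(z, \sigma) - \Big\langle \widehat{\nabla}_{CC} L_\xi^{\pi}(z, \sigma), \frac{z}{\|z\|_2}\Big\rangle_2 \frac{z}{\|z\|_2} = P_z\,\widehat{\nabla}_{CC} L_\xi^{\pi}(z, \sigma),$$
and observe that for a fixed score vector $z$ the map $P_z$ is a constant linear operator whose only argument is the random estimate; all randomness enters through the sampled noise $\varepsilon_{\backslash j}$, not through $z$. Then I would push the expectation through $P_z$: by linearity of expectation and unbiasedness of CCS,
$$\mathbb{E}\,\widehat{g} = P_z\,\mathbb{E}\,\widehat{\nabla}_{CC} L_\xi^{\pi}(z, \sigma) = P_z\,\nabla_z L_\xi^{\pi}(z, \sigma).$$
Finally I would invoke Theorem~\ref{theorem-4} to identify the right-hand side with $\nabla_z L^{\pi}(z, \widehat{\sigma}(z))$, the SFA gradient, which establishes unbiasedness.

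I do not expect a genuine obstacle here: the entire argument rests on the fact that orthogonalization is a \emph{post-sampling} linear operation, so it commutes with expectation and therefore cannot inject bias. The one point deserving care is to keep $z$ fixed throughout (it is a deterministic parameter, so $P_z$ is not random) and to apply CCS at the scale $\sigma = \widehat{\sigma}(z)$, which is consistent with the choice $z' = z$ made in Theorem~\ref{theorem-4}; under that choice $\widehat{\sigma}(z) = \sigma$, so the CCS estimate is computed at exactly the scale at which its unbiasedness was proven and no rescaling correction is needed.
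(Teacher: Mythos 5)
Your proposal is correct and matches the paper's reasoning: the corollary is treated there as an immediate consequence of Theorem~4, exactly as you argue, since the projector $P_z = I_n - \frac{zz^T}{\|z\|_2^2}$ is deterministic at fixed $z$ and commutes with expectation, so unbiasedness of CCS transfers directly to the orthogonalized estimate. Your added care about evaluating CCS at the scale $\sigma = \widehat{\sigma}(z)$ (valid since $z'=z$ implies $\widehat{\sigma}(z)=\sigma$) is also consistent with the paper's setup.
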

Since orthogonalization reduces the norm of the estimate, it necessarily reduces the variance, so we obtain the following corollary. 
\begin{corollary}
SFA CCS estimate has a lower variance than the original CCS.
\end{corollary}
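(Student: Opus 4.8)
The plan is to recognize that the SFA CCS estimate is nothing but the orthogonal projection of the ordinary CCS estimate onto the hyperplane $z^{\perp}$, and that an orthogonal projection is a contraction, so it cannot enlarge any centered second moment. Concretely, write $P := I_n - \frac{z z^{\top}}{\|z\|_2^2}$ for the orthogonal projection onto $z^{\perp}$ and abbreviate $\hat g := \widehat{\nabla}_{CC}L^{\pi}_\xi(z, \sigma)$. By the preceding corollary the SFA CCS estimate is exactly $P\hat g$, and the crucial structural observation is that $P$ depends only on $z$ and \emph{not} on the sampling noise $\varepsilon$; hence $P$ is deterministic with respect to the randomness over which we take expectations and commutes with $\mathbb{E}$.

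First I would pin down the meaning of variance for a vector-valued estimator as the trace of its covariance matrix, i.e.\ $\mathrm{Var}(\hat v) := \mathbb{E}\,\|\hat v - \mathbb{E}\hat v\|_2^2$; this is the reading consistent with the informal phrase ``orthogonalization reduces the norm.'' Using that $P$ commutes with expectation, unbiasedness of CCS (so $\mathbb{E}\hat g = \nabla_z L^{\pi}_\xi$) together with Theorem~\ref{theorem-4} gives $\mathbb{E}[P\hat g] = P\,\nabla_z L^{\pi}_\xi = \nabla_z L^{\pi}(z, \widehat\sigma(z))$, so the projected estimate is unbiased for its own SFA target. This is the point that prevents the different targets of the two estimators from causing trouble: each estimator will be compared against its own mean.

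Next I would compare the two centered second moments directly. Setting $g := \nabla_z L^{\pi}_\xi = \mathbb{E}\hat g$ and using $P\hat g - \mathbb{E}[P\hat g] = P(\hat g - g)$, we obtain
\begin{align*}
\mathrm{Var}(P\hat g)
&= \mathbb{E}\,\big\|P(\hat g - g)\big\|_2^2 \\
&\le \mathbb{E}\,\big\|\hat g - g\big\|_2^2
= \mathrm{Var}(\hat g),
\end{align*}
where the inequality is the elementary contraction property $\|Px\|_2 \le \|x\|_2$ of an orthogonal projection, which follows from $\|x\|_2^2 - \|Px\|_2^2 = x^{\top}(I_n - P)x \ge 0$ since $I_n - P$ is the orthogonal projection onto $\mathrm{span}(z)$ and hence positive semidefinite. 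This is precisely the claimed variance reduction.

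I do not expect a genuinely hard step here; the argument is immediate once variance is read as trace of covariance. The one subtlety worth guarding against is the temptation to compare the raw norms $\mathbb{E}\|P\hat g\|_2^2$ and $\mathbb{E}\|\hat g\|_2^2$, which would conflate the drop in mean with the drop in fluctuation; the correct move is to contract the \emph{centered} fluctuation $\hat g - g$ pointwise in $\varepsilon$ before integrating, which is legitimate exactly because $P$ is constant in $\varepsilon$. A final line should note that the inequality is strict whenever the component of $\hat g$ along $z$ carries nonzero variance, which is the generic case and justifies the strict wording ``lower variance.''
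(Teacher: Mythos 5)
Your proof is correct and follows essentially the same route as the paper, whose entire justification for this corollary is the in-text remark that orthogonalization reduces the norm of the estimate and hence necessarily its variance. Your elaboration---centering first and then contracting the fluctuation $\hat g - \mathbb{E}\hat g$ by the deterministic linear projection $P$---is exactly what is needed to make that remark rigorous (pointwise norm reduction by a general mean-shifting map does \emph{not} imply variance reduction, as you note), so your version is, if anything, more careful than the paper's.
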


The intuition for the orthogonalization is based on Scalar-freenees: the function $L(z, \xi)$ \textit{does not change} along $z$ direction, so this direction in the gradient $\nabla_z L_\xi^{\pi}$ does not contribute to $L(z, \xi)$ optimization. 

As we need to deal with possibility of $z = z' = \mathbb{0}_n$, we introduce a parameter $\nu > 0$ and replace $\|z\|_2$ by $\|z\|_2 + \nu$:
\begin{multline*}
\widehat{\nabla}_{CC} L^{\pi}_\xi(z, \sigma|z', \nu)\big|_{z'=z} := \widehat{\nabla}_{CC} L^{\pi}_\xi(z, {\sigma}) \\-\Big\langle \widehat{\nabla}_{CC} L_\xi^{\pi}(z, {\sigma}),  \frac{z}{\|z\|_2+\nu}\Big\rangle_2 \frac{z}{\|z\|_2+\nu}.
\end{multline*}
\begin{lemma}
Bias of SFA CCS estimate is uniformly bounded:
$$\big|\mathbb{E}\widehat{\nabla}_{CC} L^{\pi}_\xi(z, \sigma_0|z', \nu) - \nabla_z L^{\pi}_\xi(z, \widehat{\sigma})\big| = \mathcal{O}\Big(\frac{1}{\|z\|+\nu}\Big).$$
As a consequence, if $\nu \rightarrow \infty$ or $\|z\| \rightarrow \infty$, then the estimate is asymptotically unbiased. 
\end{lemma}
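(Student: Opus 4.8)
The plan is to isolate the only source of bias, which is the $\nu$-regularized orthogonalization, since by Theorem~\ref{theorem-3} the plain CCS estimate is \emph{exactly} unbiased: $\mathbb{E}\,\widehat{\nabla}_{CC}L^\pi_\xi(z,\sigma) = \nabla_z L^\pi_\xi(z,\sigma) =: g$. First I would push the expectation through the definition of the SFA estimate. The map $\widehat{g}\mapsto \widehat{g} - \langle \widehat{g}, \tfrac{z}{\|z\|_2+\nu}\rangle_2\,\tfrac{z}{\|z\|_2+\nu}$ is linear in $\widehat{g}$ and the noise enters only through $\widehat g$ (the point $z$ is deterministic), so by linearity of expectation
\[
\mathbb{E}\,\widehat{\nabla}_{CC}L^\pi_\xi(z,\sigma\mid z',\nu)\big|_{z'=z} = g - \frac{\langle g, z\rangle_2}{(\|z\|_2+\nu)^2}\,z .
\]
Thus no extra bias is introduced by sampling, and everything reduces to the mismatch between the normalization $\|z\|_2+\nu$ used in the estimate and the exact one.

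Next I would compare with the target. By Theorem~\ref{theorem-4} (applying the $\nu$-regularization consistently inside $\widehat\sigma$), the exact SFA gradient is the radial correction of $g$,
\[
\nabla_z L^\pi_\xi(z,\widehat\sigma) = g - \frac{\langle g, z\rangle_2}{\|z\|_2(\|z\|_2+\nu)}\,z ,
\]
so subtracting cancels the $g$-terms and leaves a purely radial vector. A one-line computation of the scalar gap gives
\[
\big\|\mathbb{E}\,\widehat{\nabla}_{CC}L^\pi_\xi(z,\sigma\mid z',\nu)\big|_{z'=z} - \nabla_z L^\pi_\xi(z,\widehat\sigma)\big\| = |\langle g, z\rangle_2|\,\frac{\nu}{(\|z\|_2+\nu)^2} \le \frac{|\langle g, z\rangle_2|}{\|z\|_2+\nu} ,
\]
so the whole statement reduces to proving that the radial component $\langle g, z\rangle_2$ is bounded by a constant independent of $z$ and $\sigma$.

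The crux, and the step I expect to be the main obstacle, is this uniform bound on $\langle g, z\rangle_2$: a naive Cauchy--Schwarz estimate $|\langle g, z\rangle_2|\le \|g\|\,\|z\|_2 = \mathcal{O}(\|z\|_2/\sigma)$ is far too weak (it neither vanishes nor stays bounded), so I would instead exploit scalar-freeness. Differentiating the identity $L^\pi_\xi(\lambda z,\lambda\sigma) = L^\pi_\xi(z,\sigma)$ at $\lambda = 1$ yields the Euler relation $\langle g, z\rangle_2 = -\sigma\,\partial_\sigma L^\pi_\xi(z,\sigma)$, and differentiating $L^\pi_\xi(z,\sigma) = \int L(w,\xi)\,\sigma^{-n}\pi((w-z)/\sigma)\,dw$ in $\sigma$ and changing variables gives the representation
\[
\langle g, z\rangle_2 = \int_{\mathbb{R}^n} L(z+\sigma u,\xi)\big(\langle \nabla\pi(u), u\rangle_2 + n\,\pi(u)\big)\,du ,
\]
in which the weight integrates to zero (integration by parts gives $\int \langle\nabla\pi,u\rangle_2\,du = -n$), reflecting that a constant loss has no radial derivative. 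Bounding $|L|\le l$ by uniform boundedness and using the smoothing regularity (bounded derivative near the origin, derivative decay $\|\nabla\pi\|_2 = \mathcal{O}(\|u\|_2^{-n-2})$ at infinity so that $\|u\|_2\|\nabla\pi\|_2$ is integrable) shows $\int|\langle\nabla\pi,u\rangle_2 + n\pi|\,du \le C < \infty$, whence $|\langle g, z\rangle_2| \le l\,C = \mathcal{O}(1)$ uniformly. Combining this with the scalar gap above yields the claimed $\mathcal{O}(1/(\|z\|_2+\nu))$ bound, and in particular asymptotic unbiasedness as $\nu\to\infty$ or $\|z\|_2\to\infty$.
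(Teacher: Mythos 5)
Your proof is correct, and its technical core is exactly the argument the paper intends: the paper never writes a standalone proof of this lemma, but the two ingredients you use --- linearity/unbiasedness of plain CCS, and the uniform bound $|\langle \nabla_z L^{\pi}_\xi, z\rangle_2| = \mathcal{O}(1)$ obtained from the Euler identity $\langle \nabla_z L^{\pi}_\xi, z\rangle_2 = -\sigma\,\partial_\sigma L^{\pi}_\xi(z,\sigma)$ together with the integral representation, uniform boundedness of $L$, and the derivative-decay of $\pi$ --- are precisely the chain of auxiliary lemmas proved in Appendix~D for Theorem~\ref{theorem-4} (the PDE $\partial_\sigma L_\xi^{\pi} = -\sigma^{-1}\langle\nabla_z L^{\pi}_\xi, z\rangle_2$, the $\mathcal{O}(\sigma^{-1})$ bound on $\partial_\sigma L^{\pi}_\xi$, and the resulting corollary); your change-of-variables computation matches the paper's almost line by line.

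One point deserves explicit attention: you quietly replaced the comparison target, and this replacement is in fact necessary for the statement to be true. Write $g := \nabla_z L^{\pi}_\xi(z,\sigma)$. If $\nabla_z L^{\pi}_\xi(z,\widehat\sigma)$ is read literally as the Theorem~\ref{theorem-4} gradient $g - \langle g, z/\|z\|_2\rangle_2\, z/\|z\|_2$, the bias equals $\langle g,z\rangle_2\, z\,\big(\|z\|_2^{-2}-(\|z\|_2+\nu)^{-2}\big)$, which is of order $|\langle g,z\rangle_2|\,\nu/(\|z\|_2(\|z\|_2+\nu))$; this is not $\mathcal{O}(1/(\|z\|_2+\nu))$ uniformly near $z=0$, and as $\nu\to\infty$ it tends to the radial component $|\langle g,z\rangle_2|/\|z\|_2\ne 0$, so the claimed consequence would fail. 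Your reading (regularizing $\widehat\sigma$ by $\nu$ as well, giving the target $g - \langle g,z\rangle_2\, z/(\|z\|_2(\|z\|_2+\nu))$) repairs this, and so does the reading suggested by the paper's surrounding text (``SFA can be seen as a bias--variance tradeoff \ldots for CCS estimate of $\nabla_z L_\xi^{\pi}(z, \sigma)$''), namely taking the target to be the plain gradient $g$ itself, against which the bias is $|\langle g,z\rangle_2|\,\|z\|_2/(\|z\|_2+\nu)^2 \le \mathcal{O}(1/(\|z\|_2+\nu))$. These two repaired targets differ by $\langle g,z\rangle_2\, z/(\|z\|_2(\|z\|_2+\nu))$, whose norm is $\le \mathcal{O}(1/(\|z\|_2+\nu))$ by the same uniform radial bound, so the lemma is equivalent under either reading and your proof covers both; it would strengthen your write-up to state this clarification of the target explicitly rather than in a parenthetical.
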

Thus, for the convergence analysis we consider only $\widehat{\nabla}_{CC} L^{\pi}(z, \sigma)$ since the estimate $\widehat{\nabla}_{CC} L^{\pi}(z, \sigma_0|z', \nu)$ can be made unbiased by varying the parameter $\nu > 0$. In practice, we consider $\widehat{\nabla}_{CC} L^{\pi}_\xi(z, \sigma_0|z', \nu)$ with fixed $\nu = 10^{-2}$ as we observed that this parameter performs well enough. Moreover, SFA can be seen as a bias--variance tradeoff controlled by $\nu > 0$ for CCS estimate of $\nabla_z L_\xi^{\pi}(z, \sigma)$.
For practical comparison of $\widehat{\nabla}_{CC} L^{\pi}_\xi(z, \sigma)$ and $\widehat{\nabla}_{CC} L^{\pi}_\xi(z, \sigma_0|z', \nu)$ we refer to Section~\ref{Experiments}, where we show that SFA gives a significant improvement.

\section{Global Optimization by Diffusion}\label{sec:global}

\subsection{SGLB}

Previously, we discussed the importance of global optimization of $\mathcal{L}_N(\theta)$. As we show in this section, this can be achieved by global optimization of smoothed $\mathcal{L}_N^{\pi}(\theta,\sigma)$ with $\sigma = 1$ (if smoothing is consistent) using the recently proposed Stochastic Gradient Langevin Boosting (SGLB)~\cite{SGLB}. SGLB is easy to apply: essentially, each iteration of standard SGB is modified via model shrinkage and adding Gaussian noise to the gradients. However, the obtained algorithm is backed by strong theoretical results, see~\cite{SGLB} for the details and Appendix~\ref{appendix:sglb} for a brief sketch. 
The global convergence is implied by the fact that as the number of iterations grows, the stationary distribution $p_\beta(F)$ of the predictions $F = (f_{\xi_1}(\theta), \ldots, f_{\xi_N}(\theta))$ concentrates around the global optima of the implicitly regularized loss 
$$
\mathcal{L}^{\pi}_N(F, \sigma, \gamma) = \mathcal{L}^{\pi}_N(F, \sigma) + \frac{\gamma}{2} \|\Gamma F\|_2^2, 
$$ 
where $\Gamma$ is an implicitly defined regularization matrix.
More formally, $p_\beta(F) \propto \exp(-\beta \mathcal{L}^{\pi}_N(F, \sigma, \gamma))$.

Global convergence of SGLB requires Lipschitz smoothness and continuity~\cite{SGLB}. We can ensure this for the entire $\mathcal{R}_0$, which allows us to claim the following theorem (see Appendix~\ref{appendix:theorem-5} for the proof).

\begin{theorem}\label{theorem-5}
SGLB method applied to $\mathcal{L}_N^{\pi}(F, \sigma)$ converges \emph{globally} to optima of $\mathcal{L}_N(F) \equiv \mathcal{L}_N(\theta)$ when used with CCS estimate.
\end{theorem}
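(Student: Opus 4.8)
The plan is to verify that the smoothed loss $\mathcal{L}_N^\pi(F,\sigma)$ together with the CCS gradient estimate meets the hypotheses of the SGLB convergence theorem~\cite{SGLB}, and then to transport the convergence on the smoothed landscape back to the discrete target $\mathcal{L}_N$ via consistency and scale-freeness.

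First I would establish the regularity of $\mathcal{L}_N^\pi(F,\sigma)$ demanded by SGLB, namely Lipschitz smoothness and continuity. Writing $L_\xi^\pi(z,\sigma)=(L_\xi\ast\pi_\xi)(z)$ and pushing the derivative onto the smooth factor gives $\nabla_z L_\xi^\pi = L_\xi \ast \nabla_z \pi_\xi$; the uniform bound $|L|\le l$ together with the uniformly bounded and decaying $\nabla_z\pi_\xi$ (the Smoothing Properties) yields a bounded gradient, and since the Gaussian smoothing $\mathcal{N}(-\mu r, I_n)$ is $C^\infty$, $\nabla_z L_\xi^\pi$ is Lipschitz in $z$. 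Next, by Theorem~\ref{theorem-3} the CCS estimate is unbiased, and by the subsequent Statement it is uniformly bounded by $\mathcal{O}(\sigma^{-1})$, so its variance and inter-coordinate covariation are uniformly bounded; these are exactly the stochastic-gradient assumptions required to run SGLB.

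With these hypotheses in place, I would invoke the SGLB guarantee: the law of the predictions $F_t$ converges to the Gibbs stationary distribution $p_\beta(F)\propto\exp(-\beta\,\mathcal{L}_N^\pi(F,\sigma,\gamma))$ with $\mathcal{L}_N^\pi(F,\sigma,\gamma)=\mathcal{L}_N^\pi(F,\sigma)+\frac{\gamma}{2}\|\Gamma F\|_2^2$. As $\beta\to\infty$ this measure concentrates on the global minimizers of the regularized smoothed loss, and sending $\gamma\to 0$ removes the implicit regularization, giving concentration on the global minimizers of $\mathcal{L}_N^\pi(F,\sigma)$. The remaining task is to identify these with the minimizers of $\mathcal{L}_N$. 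By consistency (Theorem~\ref{theorem-1} and Definition~\ref{pi-RQF}), $\mathcal{L}_N^\pi(\cdot,\sigma)\to\mathcal{L}_N(\cdot)$ almost surely and locally uniformly as $\sigma\to 0_+$; scale-freeness $L_\xi^\pi(\lambda z,\lambda\sigma)=L_\xi^\pi(z,\sigma)$ means that fixing $\sigma=1$ and enlarging $\|F\|$ realizes the effective scale $1/\|F\|\to 0$, so the regime $\gamma\to 0$ that frees $\|F\|$ to grow is precisely the one in which $\mathcal{L}_N^\pi$ reproduces $\mathcal{L}_N$. Local uniform convergence then gives epi-convergence of the landscapes, and the stability of global minimizers~\cite{artstein} lets me conclude that the SGLB limit concentrates on $\argmin\mathcal{L}_N$ in the sense of Definition~\ref{ArgminDef}.

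The hard part will be coupling the three limits --- the inverse temperature $\beta\to\infty$, the regularization $\gamma\to 0$, and the effective smoothing $1/\|F\|\to 0$ --- in a mutually compatible order, and matching the unusual $\argmin$ of Definition~\ref{ArgminDef}. Concretely, I must ensure that relaxing the regularization drives the minimizers to large $\|F\|$, so that the residual smoothing bias of Section~\ref{sec:bias_example} vanishes rather than being pinned at a finite scale, and that the resulting near-minimizers lie strictly inside the constancy cells $U_i$ rather than on the jump boundaries; here Jumps regularity excludes the breaking points from $\argmin$ and guarantees the limit lands in the open interior required by Definition~\ref{ArgminDef}. Controlling this interplay, together with checking that the SGLB generalization bound survives these limits, is the delicate step; the smoothness and boundedness verifications are routine given the regularity already assumed for the smoothing.
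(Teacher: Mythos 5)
Your proposal follows the paper's proof skeleton almost exactly in its first three steps: you verify Lipschitz continuity and smoothness of $\mathcal{L}_N^{\pi}(F,\sigma)$ (the paper's Lemma~\ref{thm:lipschitz}, proved there from the $\mathcal{O}(\sigma^{-1})$ uniform bound of the CCS estimate and an $\mathcal{O}(\sigma^{-2})$ bound for the Hessian; your convolution argument $\nabla_z L_\xi^{\pi} = L \ast \nabla_z \pi_\xi$ is an equivalent derivation), you check the bounded gradient-estimate error $\|\widehat{\nabla}_{CC}\mathcal{L}_N^{\pi}(F,\sigma) - \nabla\mathcal{L}_N^{\pi}(F,\sigma)\|_2 = \mathcal{O}(1)$ required by \citet{SGLB}, and you invoke the stationary Gibbs measure $p_\beta(F) \propto \exp(-\beta\,\mathcal{L}_N^{\pi}(F,\sigma,\gamma))$. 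The divergence is in the final transfer step, and it is consequential. The paper keeps $\gamma$ \emph{fixed} and sends $\sigma \rightarrow 0_+$: consistency (Theorem~\ref{theorem-1}) gives convergence at the level of measures, $p_\beta \rightarrow p_\beta^{*} \propto \exp\big(-\beta(\mathcal{L}_N(F) + \frac{\gamma}{2}\|\Gamma F\|_2^2)\big)$, and the limit is still a normalizable Gibbs measure because the Gaussian regularizer never leaves; concentration around global optima of $\mathcal{L}_N$ for $\beta \gg 1$ then follows with only two limits and no ordering problem. You instead fix $\sigma = 1$ and send $\gamma \rightarrow 0$ after $\beta \rightarrow \infty$. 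The two parameterizations are equivalent by the unlabeled Statement following Theorem~\ref{theorem-5} (the correspondence $(\sigma,\gamma) \leftrightarrow (1,\sigma^2\gamma)$), and you correctly identify the mechanism (small $\gamma$ frees $\|F\|$ to grow, driving the effective smoothing scale to zero), but at fixed $\sigma$ with $\gamma = 0$ the density $\exp(-\beta\mathcal{L}_N^{\pi}(F,1))$ is not integrable, since $\mathcal{L}_N^{\pi}$ is bounded on an unbounded domain; so ``concentration on the minimizers of $\mathcal{L}_N^{\pi}(F,\sigma)$ after removing the regularization'' is not a well-defined object, which is precisely the three-way limit coupling you defer as ``the hard part.''

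The point to internalize is that this hard part is not something to be controlled; it is something the paper's choice of coordinates makes vanish. If you rewrite your final step in the paper's parameterization — fixed $\gamma$, $\sigma \rightarrow 0_+$, transfer via consistency at the level of Gibbs measures rather than via epi-convergence of landscapes and minimizer stability — your argument closes with no additional machinery, and your remarks about Definition~\ref{ArgminDef} and Jumps regularity (breaking points carrying zero Gibbs mass) slot in unchanged. Your extra worry about the generalization bound surviving the limits is not needed here; that is the separate content of Theorem~\ref{theorem=gen-gap}.
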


The following statement ensures that we can safely fix $\sigma = 1$ and fit only $\gamma$ parameter without loosing any possible solution.
\begin{statement}
$\mathbb{E}_{F\sim p_\beta}\mathcal{L}_N(F) = \mathbb{E}_{F'\sim p_\beta'}\mathcal{L}_N(F')$, where $p_\beta$ corresponds to $(\sigma, \gamma)$ and $p_\beta'$ to $(1, \sigma^2 \gamma)$.
\end{statement}
\begin{proof}
Due to Scalar-freeness, we can write $\mathcal{L}_N^{\pi}(F, \sigma) \equiv \mathcal{L}_N^{\pi}(\sigma^{-1}F, 1)$ and $\frac{\gamma}{2}\|\Gamma F\|_2 \equiv \frac{\sigma^2\gamma}{2}\|\Gamma \big(\sigma^{-1}F\big)\|_2^2.$
Finally, due to Scalar-freeness, the change $F' = \sigma^{-1}F$ does not change the value of $\mathcal{L}_N(F)\equiv \mathcal{L}_N(F')$ and thus the expectation does not change.
\end{proof}

\subsection{Generalization}
\label{Generalization-Section}

\citet{SGLB} related the generalization gap with the uniform spectral gap parameter $\lambda_* \ge 0$ for the distribution $p_\beta(\theta) := \frac{\exp(-\beta \mathcal{L}_N(\theta, \sigma, \gamma))}{\int_{\mathbb{R}^m}\small{\exp(-\beta\mathcal{L}_N(\theta, \sigma, \gamma))}\mathrm{d}\theta}$ (see \citet{DBLP:journals/corr/RaginskyRT17} for the definition of a uniform spectral gap). Here $p_\beta(\theta)$ represents the limiting (as the learning rate goes to zero) distribution of the vector of parameters $\theta$ and is induced by the distribution $p_\beta(F) \propto \exp(-\beta\mathcal{L}(F, \sigma, \gamma))$ using the relation $F = \Phi \theta$. The following theorem is proven in Appendix~\ref{appendix:theorem-6}.

\begin{theorem}
\label{theorem=gen-gap}
The generalization gap $\big|\mathbb{E}_{\theta\sim p_\beta(\theta)} \mathcal{L}^{\pi}(\theta, \sigma) - \mathbb{E}_{\theta\sim p_\beta(\theta)}\mathcal{L}_N^{\pi}(\theta, \sigma)\big|$ can be bounded by:
$$\mathcal{O}\left(\left(\beta+2d+\frac{d^2}{\beta}\right)\frac{\exp(\mathcal{O}(\frac{\beta}{\gamma\sigma^2}))}{\gamma N}\right)\,.$$
\end{theorem}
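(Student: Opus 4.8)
The plan is to deduce the bound from the generalization result of \citet{SGLB}, which controls, for the stationary Gibbs measure $p_\beta$, the discrepancy between population and empirical risk in terms of the uniform spectral gap $\lambda_*$ of the associated Langevin generator (in the sense of \citet{DBLP:journals/corr/RaginskyRT17}), together with the boundedness and Lipschitz-smoothness of the loss, the dimension $d$, and the inverse temperature $\beta$. That bound has the shape $|\text{gap}| = \mathcal{O}\big(\frac{1}{\lambda_* N}\,P(\beta,d)\big)$, where $P(\beta,d)=\mathcal{O}\big((\beta+d)^2/\beta\big)=\mathcal{O}(\beta+2d+d^2/\beta)$ is a moment polynomial coming from the concentration of $p_\beta$. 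The proof therefore reduces to two tasks: (i) check that our smoothed, regularized loss satisfies the structural hypotheses of the SGLB bound, and (ii) lower-bound $\lambda_*$ with the correct dependence on $\gamma$ and $\sigma$.

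For task (i), I would collect the regularity facts already available. The smoothed loss $L^{\pi}_\xi(\cdot,\sigma)$ is uniformly bounded by $l$ (uniform boundedness of $\mathcal{R}_0$), and by the CCS analysis of Section~\ref{sec:ccs} its gradient is uniformly bounded by $\mathcal{O}(\sigma^{-1})$, so $\mathcal{L}^{\pi}_N(\cdot,\sigma)$ is $\mathcal{O}(\sigma^{-1})$-Lipschitz and Lipschitz-smooth; the quadratic penalty $\frac{\gamma}{2}\|\Gamma F\|_2^2$ supplies the confinement (dissipativity). The crucial reduction is via Scalar-freeness: since $\mathcal{L}^{\pi}_N(F,\sigma)=\mathcal{L}^{\pi}_N(\sigma^{-1}F,1)$, passing to $\tilde F=\sigma^{-1}F$ turns the potential into $\mathcal{L}^{\pi}_N(\tilde F,1)+\frac{\gamma\sigma^2}{2}\|\Gamma \tilde F\|_2^2$, in which the loss now has $\mathcal{O}(1)$ gradient while the confinement parameter is $\gamma\sigma^2$. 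This explains why the combination $\gamma\sigma^2$, rather than $\gamma$ alone, governs the landscape.

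For task (ii), I would invoke the RRT lower bound on the uniform spectral gap for a dissipative, smooth potential at inverse temperature $\beta$. Plugging in confinement of order $\gamma\sigma^2$ and gradient scale $\mathcal{O}(1)$, the energy barrier controlling non-log-concave mixing is of order $1/(\gamma\sigma^2)$, whence $\lambda_* \ge c\,\gamma\,\exp\!\big(-\mathcal{O}(\beta/(\gamma\sigma^2))\big)$ for some constant $c>0$; the exponential is the price of non-convexity of the smoothed ranking loss, which is only scale-free and (before smoothing) piecewise constant, so no dimension-free log-concave estimate is available. Substituting this gap into the SGLB bound gives $\frac{1}{\lambda_* N}=\mathcal{O}\big(\exp(\mathcal{O}(\beta/(\gamma\sigma^2)))/(\gamma N)\big)$, and multiplying by $P(\beta,d)$ yields exactly the claimed rate.

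The main obstacle is task (ii): producing a genuinely \emph{uniform} spectral-gap lower bound. Two points require care. First, the bound must hold uniformly over all admissible samples $\xi_1,\dots,\xi_N$ so that $\lambda_*$ is a true uniform gap; this is where uniform boundedness and the $\mathcal{O}(\sigma^{-1})$ gradient estimate of the CCS scheme (both uniform in $\xi$) are essential. Second, one must propagate the Scalar-freeness reparametrization and the $\Phi$-induced change of variables between $F$-space and $\theta$-space carefully, so that the confinement enters the exponent as $\gamma\sigma^2$ while the surviving polynomial prefactors collapse to the single factor $1/(\gamma N)$; this bookkeeping, though routine, is where the precise constants $\beta+2d+d^2/\beta$ and the exponent $\beta/(\gamma\sigma^2)$ are pinned down.
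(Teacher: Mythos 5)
Your overall architecture matches the paper's proof: both start from the generalization bound of \citet{DBLP:journals/corr/RaginskyRT17} and \citet{SGLB}, which has the form $\mathcal{O}\big(\frac{(\beta+d)^2}{N\lambda_*}\big)$ (and expanding $(\beta+d)^2/\beta = \beta+2d+d^2/\beta$ accounts for the polynomial prefactor), and both reduce the theorem to a lower bound on the uniform spectral gap $\lambda_*$ with the correct dependence on $\gamma$ and $\sigma$. Your task (i) is also consistent with the paper: uniform boundedness, the $\mathcal{O}(\sigma^{-1})$ CCS gradient bound giving Lipschitz continuity and $\mathcal{O}(\sigma^{-2})$ Lipschitz smoothness, and the scalar-freeness reparametrization showing that $\gamma\sigma^2$ is the effective confinement parameter.

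The genuine gap is in task (ii). You invoke ``the RRT lower bound on the uniform spectral gap for a dissipative, smooth potential'' and claim it yields $\lambda_* \ge c\,\gamma\exp(-\mathcal{O}(\beta/(\gamma\sigma^2)))$, dimension-free. But the spectral-gap estimate available in \citet{DBLP:journals/corr/RaginskyRT17} for general dissipative non-convex potentials is exponential in the dimension: $1/\lambda_*$ can be of order $\exp(\mathcal{O}(d))$, and the paper states this explicitly as the reason a different argument is needed. There is no RRT-type result in which the energy barrier enters as $1/(\gamma\sigma^2)$ with no dimension factor; as written, your step would carry an extra $\exp(\mathcal{O}(d))$, which contradicts the polynomial-in-$d$ prefactor of the theorem (you even note yourself that no dimension-free log-concave estimate is available, yet the bound you assert is dimension-free). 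The missing ingredient is the one the paper actually uses: the Gibbs potential is a uniformly bounded, $\mathcal{O}(\sigma^{-2})$-Lipschitz-smooth perturbation of the Gaussian term $\frac{\gamma}{2}\|\Gamma\Phi\theta\|_2^2$, so a perturbation bound for logarithmic Sobolev inequalities (Lemma~2.1 of \citet{2015arXiv150702389B}, with Miclo's trick skipped since the smoothed loss is already regular enough) gives the dimension-free estimate $1/\lambda_* = \mathcal{O}\big(\exp(\mathcal{O}(\beta/(\gamma\sigma^2)))/(\gamma\beta)\big)$. Substituting that log-Sobolev bound into the RRT/SGLB generalization bound is what produces the claimed rate; without it your derivation does not close.
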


\section{Experiments}\label{Experiments}

As baseline approaches, we consider the well-known LambdaMART framework optimized for $\mathrm{NDCG@k}$~\cite{wu2010adapting}, NDCG-Loss2++ from the LambdaLoss framework~\cite{lambdaloss}, and SoftRank~\cite{Taylor2008}. We also apply the technique proposed by~\citet{48689} to the baselines, the corresponding methods are called $\mathbb{E}\lambda$-MART and $\mathbb{E}\lambda$-Loss. Similarly to~\citet{lambdaloss}, we set the parameter $\mu$ for NDCG-Loss2++ to be equal to~5. According to our experiments, NDCG-Loss2++ performed significantly better than NDCG-Loss2, which agrees with~\citet{lambdaloss}. 

\subsection{Synthetic Data}
Unfortunately, in practice, we cannot verify if we have reached the global optimum as we cannot evaluate all possible ensembles of trees. But having theoretical guarantees is important as it implies the stability of the algorithm and good generalization. In this section, we describe a simple synthetic test to verify whether StochasticRank can reach the global optimum.

The following dataset is multimodal (has several local optima) for $\mathrm{NDCG}@3$: the number of queries is $N = 2$, first relevance vector is $r_1 = (3, 2, 1)$ and the second is $r_2 = (3, 2)$. We consider the following features for the first query: $x_1 = (1, 0, 0), x_2 = (0, 1, 0), x_3 = (0, 0, 1)$ and for the second $x_3$ and $x_1$ (in the given order). 

We consider this simple synthetic dataset for two reasons: first, it clearly shows that ranking losses are likely to be multimodal; second, it allows us to demonstrate how multimodality prevents existing approaches from reaching the global optimum. 

We limited the tree depth parameter to 3, so one tree can separate all documents with different features. We set the number of iterations to $1000$, learning rate to $0.1$, diffusion temperature to $10^3$, and model-shrink-rate to $10^{-3}$.

The results are shown in Table~\ref{tab:synthetic}. We note that the maximum achievable $\mathrm{NDCG}@3$ for this dataset is $0.917$, i.e., StochasticRank successfully recovers the global optimum while all other approaches converge to a local optimum $0.903$. 

\begin{table}[t]
  \caption{Experimental results on synthetic data.}\label{tab:synthetic}
  \vspace{4pt}
  \centering
  \begin{tabular}{lc}
    \toprule
    Method & $\mathrm{NDCG}@3$\\
    \midrule
      $\lambda$-MART & 0.903 \\
      $\lambda$-Loss & 0.903 \\
      $\mathbb{E}\lambda$-MART & 0.903 \\
      $\mathbb{E}\lambda$-Loss & 0.903\\
      SoftRank       &  0.903 \\
      StochasticRank & \bf{0.917} \\
    \bottomrule
\end{tabular}
\end{table}

\subsection{Real Data}

\paragraph{Datasets} 
For our experiments, we use the following publicly available datasets. First, we use the data from YAHOO! Learning to Rank Challenge~\cite{Chapelle2010}: there are two datasets, each is pre-divided into training, validation, and testing parts. The other datasets are WEB10K and WEB30K released by Microsoft~\cite{QinL13}. Following~\citet{lambdaloss}, we use Fold~1 for these two datasets. 

\paragraph{Quality metrics}
The first metric we use is $\mathrm{NDCG@5}$, which is very common in LTR research.
The second one is $\mathrm{MRR}$, which is a well-known click-based metric. 
Recall that $\mathrm{MRR}$ requires binary labels, so we binarize each label by $\widetilde{y}_i := \mathbb{1}_{\{y_i > 0\}}$.
Notably, while  $\mathrm{MRR}$ is frequently used in online evaluations, it is much less studied compared to $\mathrm{NDCG@k}$ and there are no effective approaches designed for it.
Fortunately, our method can be easily adapted to any ranking metric via a combination of SGLB with Coordinate Conditional Sampling smoothed by Gaussian noise.

\paragraph{Framework} 
We implemented all approaches in CatBoost, which is an open-source gradient boosting library outperforming the most popular alternatives like XGBoost~\cite{Chen:2016} and LightGBM~\cite{LightGBM} for several tasks~\cite{catboost}. LambdaMART can be easily adapted for optimizing $\mathrm{MRR}$, so we implemented both versions. In contrast, LambdaLoss is specifically designed for $\mathrm{NDCG}$ and cannot be easily modified for $\mathrm{MRR}$. For SoftRank we used CCS to estimate gradients, since the original approach is computation and memory demanding, so it is infeasible in gradient boosting which requires~\textit{all} gradients to be estimated at each iteration.

\begin{table}[t]
  \caption{Experimental results.}
  \vspace{4pt}
  \centering
  \begin{tabular}{lccc}
    \toprule
   Method & Dataset & $\mathrm{NDCG@5}$ & $\mathrm{MRR}$ \\
   \midrule
   $\lambda$-MART & Yahoo Set 1 & 74.53 & 90.21 \\
   $\lambda$-Loss & Yahoo Set 1 & 74.73 & - \\
   $\mathbb{E}\lambda$-MART & Yahoo Set 1 & 74.57 & 90.30 \\
   $\mathbb{E}\lambda$-Loss & Yahoo Set 1 & 74.75 & - \\
   SoftRank       & Yahoo Set 1 & 71.98 & 90.17 \\
   \midrule
   SR-$\mathcal{R}_1^{soft}$ & Yahoo Set 1 & 74.68 & \textbf{91.07} \\
   SR-$\mathcal{R}_1$ & Yahoo Set 1 & \textbf{74.92} & 90.97 \\
   \midrule
   \midrule
   $\lambda$-MART & Yahoo Set 2 & 73.87 & 91.48 \\
   $\lambda$-Loss & Yahoo Set 2 & 73.89 & - \\
   $\mathbb{E}\lambda$-MART & Yahoo Set 2 & 73.87 & 91.48 \\
   $\mathbb{E}\lambda$-Loss & Yahoo Set 2 & 73.91 & - \\
   SoftRank       & Yahoo Set 2 & 73.91 &  92.16 \\
   \midrule
   SR-$\mathcal{R}_1^{soft}$ & Yahoo Set 2 & 73.95 & 93.16 \\
   SR-$\mathcal{R}_1$ & Yahoo Set 2 & \textbf{74.15} & \textbf{93.56} \\
   \midrule 
   \midrule
   $\lambda$-MART & WEB10K & 48.22 & 81.85 \\
   $\lambda$-Loss & WEB10K & 48.33 & - \\
   $\mathbb{E}\lambda$-MART & WEB10K & 48.29 & 81.72. \\
   $\mathbb{E}\lambda$-Loss & WEB10K & 48.47 & - \\
   SoftRank       & WEB10K & 42.82 & 81.38 \\
   \midrule
   SR-$\mathcal{R}_1^{soft}$ & WEB10K & 48.19 & 83.08 \\
   SR-$\mathcal{R}_1$ & WEB10K & \textbf{48.53} & \textbf{83.30} \\
   \midrule
   \midrule
   $\lambda$-MART & WEB30K & 49.55 & 83.79 \\
   $\lambda$-Loss & WEB30K & 49.45 & - \\
   $\mathbb{E}\lambda$-MART & WEB30K & 49.49 & 83.79 \\
   $\mathbb{E}\lambda$-Loss & WEB30K & 49.52 & - \\
   SoftRank       & WEB30K & 43.46 & 82.73 \\
   \midrule
   SR-$\mathcal{R}_1^{soft}$ & WEB30K & \textbf{49.67} & \textbf{85.19} \\
   SR-$\mathcal{R}_1$ & WEB30K & 49.59 & 85.01 \\
   \midrule
    \bottomrule
  \end{tabular}\label{table:results} 
\end{table}

\begin{table}[t]
  \caption{Comparison of the algorithm's features on Yahoo Set 1, where $\pi_\mu$ means using unbiased smoothing.}
  \vspace{4pt}
  \centering
  \begin{tabular}{lc}
    \toprule
   Features & $\mathrm{NDCG@5}$ \\
   \midrule
   REINFORCE  & 70.74  \\
    CCS & 71.89 \\
    CCS+SFA &  74.55  \\
    CCS+SFA+SGLB (SR-$\mathcal{R}_1^{soft}$) & 74.68  \\
    CCS+SFA+SGLB+$\pi_\mu$ (SR-$\mathcal{R}_1$) &  \textbf{74.92}  \\
    \bottomrule
  \end{tabular}\label{table:results-yahoo} 
\end{table}

\paragraph{Parameter tuning} 
For all algorithms, we set the maximum number of trees to 1000. We tune the hyperparameters using 500 iterations of random search and select the best combination using the validation set, the details are given in Appendix~\ref{appendix:tuning}.

\paragraph{Results} 
The results are shown in Table~\ref{table:results}. One can see that StochasticRank (SR-$\mathcal{R}_1$) outperforms the baseline approaches on all datasets. In all cases, the difference with the closest baseline is statistically significant with a p-value $< 0.05$ measured by the paired one-tailed t-test. Also, in most cases, SR-$\mathcal{R}_1$ outperforms SR-$\mathcal{R}_1^{soft}$, which clearly demonstrates the advantage of unbiased smoothing, which takes into account the tie resolution policy.

The results in Table~\ref{table:results} are comparable to previously reported numbers, although they cannot be compared directly, since experimental setup (e.g., the maximum number of trees) is not fully described in many cases~\cite{lambdaloss}. More importantly, the previously reported results can be overvalued, since many openly available libraries compute ranking metrics using neither worst (as in our case) nor ``expected'' permutation, but some fixed arbitrary one depending on a particular implementation of the sorting operation.

To further understand how different techniques proposed in this paper affect the quality of the algorithm, we show the improvement obtained from each feature using the Yahoo dataset and the $\mathrm{NDCG}$ metrics (see Table~\ref{table:results-yahoo}). 
We see that CCS is significantly better than REINFORCE, while SFA gives an additional significant performance boost. SGLB and consistent smoothing further improve $\mathrm{NDCG}$. We note that for both REINFORCE and CCS we use one sample per gradient estimate since the most time-consuming operation for both estimates is sorting (see Appendix~\ref{algos}).

\section{Conclusion}\label{sec:conclusion}

In this paper, we proposed the first truly direct LTR algorithm. We formally proved that this algorithm converges \textit{globally} to the minimizer of the target loss function. This is possible due to the combination of three techniques: unbiased smoothing for consistency between the original and smoothed losses; SGLB for global optimization via gradient boosting; and CCS gradient estimate with uniformly bounded error and low variance, which is required for SGLB to be applied. Our experiments clearly illustrate that the new algorithm outperforms state-of-the-art LTR methods.

\bibliography{example_paper}
\bibliographystyle{icml2020}

\appendix
\section*{Appendix}

\begin{table}[t]
\caption{Notation.}
\label{tab:notation}
\begin{center}
\begin{small}
\begin{tabular}{cl}
\toprule
Variable & Description  \\
\midrule
$z \in \R^n$ & Vector of scores \\
$\xi \in \Xi_n$ & Vector of contexts \\
$r \in \R^n$ & Vector of relevance labels \\
$\theta \in \mathbb{R}^m$ & Vector of parameters\\
$L(z,\xi)$ & Loss function \\
$L_\xi^{\pi}(z, \sigma)$ & Smoothed loss function\\
$L_\xi^{\pi}(z, \sigma|z')$ & SFA smoothing of the loss\\
$\mathcal{L}(\theta)$ & Expected loss\\
$\mathcal{L}_N(\theta)$ & Empirical loss\\
$\mathcal{L}_N^{\pi}(\theta, \sigma)$ & Smoothed empirical loss \\
$\mathcal{L}_N^{\pi}(\theta, \sigma,\gamma)$ & Regularized and consistently smoothed loss\\
$\mathcal{R}_0$ & Scale-free discrete loss functions\\
$\mathcal{R}_1$ & Ranking loss functions \\
$\mathcal{R}_1^{soft}$ & Soft ranking loss functions \\
$\pi_{\xi}(z)$ & Distribution density for smoothing\\
$p_{\beta}(\theta)$ & Invariant measure of parameters \\
$p_{\beta}(F)$ & Invariant measure of predictions \\
$\sigma > 0$ & Smoothing standart deviation \\
$\beta > 0$ & Diffusion temperature \\
$\gamma > 0$ & Regularization parameter \\
$\mu \ge 0$ & Relevance shifting parameter \\
$\nu > 0$ & Scale-Free Acceleration parameter \\
\bottomrule
\end{tabular}
\end{small}
\end{center}
\vskip -0.1in
\end{table}

\section{Proof of Statement~\ref{statement-1}}\label{appendix:statement-1}

Let us prove that the set $\argmin_{\theta\in\mathbb{R}^m} \mathcal{L}_N(\theta)$ is not empty.

Consider $U_{ij}$ being open and convex sets for $V_i = \mathrm{im }\Phi_{\xi_i}$ (see Discreteness on subspaces in Definition~\ref{GRQF}). 
Then, $U_{ij}' = \Phi_{\xi_i}^{-1}U_{ij}\subset \mathbb{R}^m$ are also open and convex. 
Henceforth, the function $\mathcal{L}_N$ can be written as (ignoring the sets of zero measure):
\begin{equation}\mathcal{L}_N(\theta) = 
N^{-1}\sum_{j_1=1}^{k_1}\ldots \sum_{j_N=1}^{k_N}c_{j_1, \ldots j_N}\mathbb{1}_{\theta\in \cap_{i=1}^N U'_{i j_i}}\,.
\end{equation}

Henceforth, the function $\mathcal{L}_N$ is also discrete with open convex sets $\mathcal{U}_{s} := \cap_{i=1}^N U'_{i j_i}$ on the whole space $\mathbb{R}^m$. Hence, its $\argmin$ is one of these sets or their union.

\section{Stochastic smoothing}

\subsection{Mollification}\label{appendix:mollification}

A natural approach for smoothing is mollification~\cite{ErmolievNorkin, 10.2307/1999442}: choose a smooth enough distribution with p.d.f. $\pi(\theta)$, consider the family of distributions $\pi_{\delta}(\theta) = \delta^{-m}\pi(\delta^{-1}\theta)$, and let $\mathcal{L}_N(\theta, \delta) := \mathcal{L}_N\ast \pi_{\delta} \equiv \mathbb{E}_{\epsilon\sim \pi}\mathcal{L}_N(\theta + \delta\epsilon)$. Then, the minimizers of $\mathcal{L}_N(\theta, \delta)$ convergence to the minimizer of $\mathcal{L}_N(\theta)$.
Unfortunately, despite theoretical soundness, it is hard to derive efficient gradient estimates even in the linear case $f_{\xi_i}(\theta) = \Phi_{\xi_i}\theta$. Moreover, in the gradient boosting setting, we do not have access to all possible coordinates of $\theta$ at each iteration. Henceforth, we cannot use the mollification approach directly.

Thus, instead of acting on the level of parameters $\theta$, we act on the level of scores $z$: $L_{\xi}^{\pi}(z, \sigma):=\mathbb{E}L(z+\sigma\varepsilon, \xi)$, where $\varepsilon$ has p.d.f. $\pi(z)$. We multiply the noise by $\sigma$ to preserve Scalar-freeness in a sense that $L^{\pi}_\xi(\lambda z, \lambda \sigma) = L_\xi^{\pi}(z, \sigma)$ for any $\lambda > 0$.

In the linear case $f(\theta) = \Phi\theta$, if $\mathrm{rk}\Phi = n$, it is not hard to show the convergence of minimizers. Indeed, we can obtain mollification by ``bypassing'' the noise from scores to parameters by multiplying on $\Phi^{-1}$. However, in general, we cannot assume $\mathrm{rk}\Phi = n$.

\subsection{Proof of Theorem~\ref{theorem-1}}\label{appendix:theorem-1}

The trick is to proceed with $L(f_{\xi_i}(\theta), \xi_i)$ and to show that there exists an open and dense set $U_{\xi_i}\subset \mathbb{R}^{m}$ such that the convergence is locally uniform as $\sigma \rightarrow 0_+$, $\mu \rightarrow \infty$, $\sigma \mu \rightarrow 0_+$.

Let us proceed with proving the existence of such $U_{\xi_i}\forall i$. Let us define 
\begin{multline*}
U_{\xi_i}:=\Big\{\theta\in \mathbb{R}^m:\forall j\ne j' \big(f_{\xi_i}(\theta)_j = f_{\xi_i}(\theta)_{j'}\big)\Rightarrow\\ 
    \forall \theta'\in\mathbb{R}^m\big(f_{\xi_i}(\theta')_j = f_{\xi_i}(\theta')_{j'}\big) \Big\}.
\end{multline*}

Clearly, the set is not empty, open, and dense. Now, take an arbitrary $\theta \in U_{\xi_i}$. Consider $z = f_{\xi_i}(\theta)$ and divide the set $\{1, \ldots, n_i\}$ into disjoint subsets $J_1, \ldots, J_k$ such that all components $z_j$ corresponding to one group are equal and all components $z_j$ corresponding to different $J$'s are different. Clearly, we need to ``resolve'' only those which are equal: for small enough $\sigma \approx 0, \sigma\mu \approx 0$ we obtain that even after adding the noise $f_{\xi_i}(\theta')-\sigma\mu r + \sigma\varepsilon$ the order of $J$'s is preserved with high probability uniformly in some vicinity of $\theta$, whilst for large enough $\mu \gg 1$ we obtain the worst case permutation of $z_j$ corresponding to the one group with high probability uniformly on the whole $U_{\xi_i}$. Thus, we obtain locally uniform convergence $\mathbb{E}L(f_{\xi_i}(\theta) - \sigma \mu r + \sigma \varepsilon, \xi_i) \rightarrow L(f_{\xi_i}(\theta), \xi_i)$.

\subsection{Proof of Theorem~\ref{theorem-2}}\label{appendix:theorem-2}

Clearly, the conditions of the theorem imply that for general $\theta$ w.l.o.g. we can assume that $\Phi_{\xi_i}\theta \in U_{ij_i}$ for some indexes $j_i$. Henceforth, after adding the noise with $\sigma \rightarrow 0_+$ we must obtain locally uniform approximation since the functions $L(z, \xi_i)$ are locally constant in a vicinity of $z = \Phi_{\xi_i}\theta \, \forall i$.

\subsection{Consistent smoothing for LSO}\label{appendix:smoothing-lso}

\begin{theorem}
In gradient boosting, if $L(\cdot, \cdot) \in \mathcal{R}_0$ is coming from the LSO problem, then any smoothing is consistent.
\end{theorem}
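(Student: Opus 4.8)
The plan is to reduce everything to Theorem~\ref{theorem-2}: for an LSO loss I only need to verify that, for every query $\xi_i$ in the sample, the cells of the discreteness partition meet $\mathrm{im}\,\Phi_{\xi_i}$ densely, i.e.\ $\forall i\ \exists j:\ U'_{ij}\neq\emptyset$ and $\overline{\cup_j U'_{ij}}=\mathrm{im}\,\Phi_{\xi_i}$. First I would record the discreteness structure of $\mathrm{DCG\text{-}RR}$. Since it depends on $z$ only through the sign pattern $(\mathbb{1}_{\{z_l>0\}})_l$, its breaking points lie exactly on the coordinate hyperplanes $\{z:z_l=0\}$, and on any subspace $V$ the sets $U_{ij}$ may be taken as the open sign-cells, namely the connected components of $V\backslash\cup_l\{z:z_l=0\}$. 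These are convex, open, pairwise disjoint, and have dense union in $V$, so $\mathrm{DCG\text{-}RR}\in\mathcal{R}_0$ with this partition; for $V=\mathbb{R}^{n_i}$ the $U_{ij}$ are precisely the open orthants.

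Next I would reduce the density hypothesis to a single transversality condition. With the full-space cells equal to the orthants, $\cup_j U_{ij}=\mathbb{R}^{n_i}\backslash\cup_l\{z:z_l=0\}$, hence
\[
\cup_j U'_{ij}=\mathrm{im}\,\Phi_{\xi_i}\ \backslash\ \bigcup_{l=1}^{n_i}\big(\{z:z_l=0\}\cap\mathrm{im}\,\Phi_{\xi_i}\big).
\]
Each removed set is a linear subspace of $\mathrm{im}\,\Phi_{\xi_i}$; it is nowhere dense precisely when it is \emph{proper}, and a finite union of proper subspaces is closed with empty interior. Therefore $\cup_j U'_{ij}$ is open and dense in $\mathrm{im}\,\Phi_{\xi_i}$ (in particular nonempty, giving some $U'_{ij}\neq\emptyset$) as soon as no coordinate hyperplane $\{z:z_l=0\}$ contains $\mathrm{im}\,\Phi_{\xi_i}$.

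The key step, and the only place where the gradient-boosting hypothesis enters, is to establish this transversality: $\mathrm{im}\,\Phi_{\xi_i}\not\subseteq\{z:z_l=0\}$ for every $l$. In the boosting construction $\Phi_{\xi_i}$ is the $0/1$ leaf-membership matrix, so every document lies in exactly one leaf of each tree; consequently every row of $\Phi_{\xi_i}$ is nonzero and the leaf columns of any single tree sum to the all-ones vector, giving $\mathbb{1}_{n_i}\in\mathrm{im}\,\Phi_{\xi_i}$. Since $\mathbb{1}_{n_i}$ has no vanishing coordinate, it lies in no $\{z:z_l=0\}$, which yields both $\mathrm{im}\,\Phi_{\xi_i}\neq\{0\}$ and transversality for all $l$. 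Feeding this into the reduction above verifies the hypotheses of Theorem~\ref{theorem-2}, so any smoothing is consistent.

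The main obstacle is the interaction with the mean-centering convention $\langle\mathbb{1}_n,z\rangle_2=0$: centering removes exactly the all-ones direction and can even collapse $\mathrm{im}\,\Phi_{\xi_i}$ into a coordinate hyperplane (e.g.\ when all documents of a query share a leaf, so the rows of $\Phi_{\xi_i}$ coincide), which is precisely the degenerate tie situation that breaks consistency for genuine ranking losses. The resolution I would emphasize is that centering was justified only through translation invariance, a property $\mathrm{DCG\text{-}RR}$ does \emph{not} possess; for the LSO loss we therefore keep the un-centered image, $\mathbb{1}_{n_i}$ survives in it, and the argument goes through. This is the conceptual reason the LSO loss escapes the smoothing bias that afflicts $\mathcal{R}_1$.
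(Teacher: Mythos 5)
Your proposal is correct and follows essentially the same route as the paper's proof: both reduce the claim to Theorem~\ref{theorem-2} and use the $0$--$1$ leaf-membership structure of the boosting matrix to exhibit a vector in $\mathrm{im}\,\Phi_{\xi}$ with all coordinates nonzero (the paper takes $\Phi_{\xi}\mathbb{1}_m \ge \mathbb{1}_n$ plus a random shift $\theta + \nu\mathbb{1}_m$; you take the column sum of a single tree), which makes the sign-cell partition of $\mathrm{DCG\text{-}RR}$ meet the image densely. Your closing remark that the mean-centering convention must be dropped for LSO (since $\mathrm{DCG\text{-}RR}$ is not translation invariant) is a sound clarification of a point the paper leaves implicit.
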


\begin{proof} Conditions from Theorem~\ref{theorem-2} translate into a condition that $\big(\Phi_{\xi}\theta\big)_j \ne 0$ for all $j$ and for all $\theta$ almost surely. This can be enforced by adding a free constant to the linear model, but in the gradient boosting setting this condition is essentially satisfied: consider $\theta = \mathbb{1}_m$, then $\big(\Phi_{\xi}\mathbb{1}_m\big)_j \ge 1\,\forall j$ since the matrix $\Phi_{\xi}$ is $0$-$1$ matrix and have at least one ``$1$'' in each row (every item fells to at least one leaf of each tree). Henceforth, for any general $\theta$ we can assume another general $\widetilde{\theta} = \theta + \nu\mathbb{1}_m$, where $\nu$ is any random variable with absolute continuous p.d.f. This in turn implies $\big(\Phi_{\xi}\widetilde{\theta}\big)_j \ne 0$ almost surely. Henceforth, Theorem~2 holds ensuring the consistency of smoothing.
\end{proof}

\section{Coordinate Conditional Sampling}\label{appendix:CCS}

\subsection{Proof of Lemma~\ref{lemma=finite-jumps}}

Consider a line $H = \{(z_j, z_{\backslash j}):\forall z_j\in\mathbb{R}\}$ and subsets $U_1,\cdots, U_k$ for $k = k(n, \mathbb{R}^n)$ from the Discretness on subspaces assumption for $V = \mathbb{R}^n$. Then $U_i \cap H = (a_i, b_i)\times \{z_{\backslash_j}\}$ due to opennes and convexity of $U_i$ for $a_i, b_i\in \mathbb{R}\cup \{\pm\infty\}$. Moreover, $(U_i \cap H) \cap (U_{i'} \cap H) = \emptyset \, \forall i\ne i'$ and, by ignoring sets of zero measure, we can assume that $\overline{\cup_i (a_i, b_i)\times \{z_{\backslash j}\}} = H$. After that, we can take all finite $\{b_1, \ldots, b_k\}\cap \mathbb{R}$ as breaking points.

\subsection{Proof of Theorem~\ref{theorem-3}}

Observe that $L\ast \pi^{j}_\xi$ tautologically equals $l_j\ast \pi^{j}_\xi$ and the convolution is distributive with respect to summation, so we can write:
$$L\ast \pi^{j} = \sum_{s=1}^{k'} \Delta l_j(b_s) \mathbb{1}_{\{z_j \le b_s\}}\ast \pi^{j}_\xi +  \mathrm{const}(z_{\backslash j}).$$
The convolution $\mathbb{1}_{\{z_j \le b_s\}}\ast \pi^{j}_\xi$ is equal to $\mathbb{P}_\xi(z_j + \sigma\varepsilon_j < b_s|\varepsilon_{\backslash j}) := \sigma^{-1}\int_{\mathbb{R}} \mathbb{1}_{\{z_j + \sigma\varepsilon_j \le b_s\}}\pi^{j}_\xi(\sigma^{-1}\varepsilon_j)\mathrm{d}\varepsilon_j$, allowing us to rewrite:
\begin{multline*}
L\ast \pi^{j}_\xi \\ = \sum_{s=1}^{k'} \Delta l_j(b_s) \mathbb{P}_\xi(\varepsilon_j <\sigma^{-1}(b_s - z_j)|\varepsilon_{\backslash j}) + \mathrm{const}(z_{\backslash j})\,.
\end{multline*}
The above formula is ready for differentiation since each term is actually a $C^{(2)}(\mathbb{R})$ function by the variable $z_j$:
$$\frac{\partial}{\partial z_j}L\ast \pi^{j}_\xi = -\sigma^{-1}\sum_{s=1}^{k'} \Delta l_j(b_s) \pi^{j}(\sigma^{-1}(b_s - z_j)).$$
After the convolution with $\pi_\xi^{\backslash j}$, we finally get the required formula.

\subsection{Proof of Corollary~\ref{cor-1}}

For LTR ($\mathcal{R}_1$ and $\mathcal{R}_1^{soft}$), all these $b_s$ actually lay in $\{z_1, \ldots, z_n\}\subset \mathbb{R}$ due to Pairwise decision boundary assumption and, henceforth, we do not need to compute them, we just need to take coordinates of $z\in\mathbb{R}^{n}$ as breaking points and note that if some of $z_s$ is not a breaking point for $L(z, \xi)$, then essentially $\Delta l_j(z_s) = 0$. Then, we can write
$$\frac{\partial}{\partial z_j}L\ast \pi^{j}_\xi = -\sigma^{-1}\sum_{s=1}^{n} \Delta l_j(z_s) \pi^{j}_\xi(\sigma^{-1}(z_s - z_j)).$$

Let us note that for LSO, we can actually take $k' = 1$ and $b_1 = 0$ and simplify the formula to:
$$l_j(z_j) = \Delta l_j \mathbb{1}_{\{z_j \le 0\}} + \mathrm{const}(z_{\backslash j}).$$

\subsection{Proof of Theorem~\ref{theorem-4}}

\begin{lemma}
\label{lemma:sigma-der}
The function $L_\xi^{\pi}(z, \sigma)$ satisfies the following linear first order Partial Differential Equation (PDE):
$$\frac{\partial}{\partial \sigma}L_\xi^{\pi}(z, \sigma) = - \sigma^{-1}\langle\nabla_{z}L^{\pi}_\xi(z, \sigma), z\rangle_2.$$
\end{lemma}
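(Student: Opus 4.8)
The plan is to recognize that the claimed PDE is exactly Euler's identity for a degree-zero homogeneous function, applied to the joint variable $(z,\sigma)$. First I would record the key structural fact, already noted just after the definition of $L_\xi^\pi$: the scaling of the noise by $\sigma$ was introduced precisely so that $L_\xi^\pi(\lambda z, \lambda\sigma) = L_\xi^\pi(z,\sigma)$ for every $\lambda > 0$. This follows in one line by a change of variables inside the expectation,
$$L_\xi^\pi(\lambda z, \lambda\sigma) = \mathbb{E}_{\varepsilon\sim\pi}L(\lambda(z+\sigma\varepsilon),\xi) = \mathbb{E}_{\varepsilon\sim\pi}L(z+\sigma\varepsilon,\xi) = L_\xi^\pi(z,\sigma),$$
where the middle equality is Scalar-freeness of $L\in\mathcal{R}_0$. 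Hence $L_\xi^\pi$, viewed as a function of the $(n+1)$-dimensional variable $(z,\sigma)$, is positively homogeneous of degree $0$.

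Next I would differentiate this homogeneity relation in $\lambda$ and evaluate at $\lambda = 1$. By the chain rule the left-hand side gives
$$\frac{d}{d\lambda}\Big|_{\lambda=1} L_\xi^\pi(\lambda z, \lambda\sigma) = \big\langle \nabla_z L_\xi^\pi(z,\sigma),\, z\big\rangle_2 + \sigma\,\frac{\partial}{\partial\sigma}L_\xi^\pi(z,\sigma),$$
while the right-hand side $L_\xi^\pi(z,\sigma)$ is constant in $\lambda$, so its $\lambda$-derivative vanishes. Equating the two and solving for $\partial_\sigma L_\xi^\pi$ yields precisely $\frac{\partial}{\partial\sigma}L_\xi^\pi(z,\sigma) = -\sigma^{-1}\big\langle \nabla_z L_\xi^\pi(z,\sigma),\, z\big\rangle_2$, which is the asserted equation.

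The only point requiring care — and the main (mild) obstacle — is justifying that $L_\xi^\pi$ is continuously differentiable jointly in $(z,\sigma)$, so that the partials above exist and the chain-rule step is legitimate. Here I would invoke the regularity assumptions on the smoothing family from the Smoothing Properties section: continuous differentiability of $\pi_\xi$, the uniform bound on $\nabla_z\pi_\xi$, and the decay condition $\|\nabla_z\pi_\xi\|_2 = \mathcal{O}(\|z\|_2^{-n-2})$. Writing $L_\xi^\pi$ in convolution form, $L_\xi^\pi(z,\sigma) = \int L(w,\xi)\,\sigma^{-n}\pi\big(\sigma^{-1}(w-z)\big)\,\mathrm{d}w$, and differentiating under the integral sign, the derivative falls on the kernel; the decay of $\nabla\pi_\xi$ supplies an integrable dominating bound and the uniform bound $|L|\le l$ controls the tails, so differentiation in both $z$ and $\sigma$ is valid and produces continuous partials. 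With $C^{(1)}$ regularity established, Euler's identity applies verbatim and the proof is complete.
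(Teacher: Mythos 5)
Your proof is correct and follows essentially the same route as the paper's: both differentiate the scale-invariance identity $L_\xi^{\pi}(\lambda z,\lambda\sigma)\equiv L_\xi^{\pi}(z,\sigma)$ with respect to $\lambda$ and evaluate at $\lambda=1$, i.e., Euler's identity for degree-zero homogeneity. Your additional justification of joint $C^{(1)}$ regularity via differentiation under the integral sign is a welcome piece of rigor that the paper leaves implicit, but it does not change the substance of the argument.
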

\begin{proof}
The proof is a direct consequence of Scalar-Freenees: we just need to differentiate the equality $L^{\pi}_\xi(\alpha z, \alpha\sigma)\equiv L^{\pi}_\xi(z, \sigma)$ (holding for $\alpha > 0$) by $\alpha$ and set $\alpha = 1$.
\end{proof}
\begin{lemma}
$\frac{\partial}{\partial \sigma}L_\xi^{\pi}(z, \sigma)$ is uniformly bounded by $\mathcal{O}(\sigma^{-1})$.
\end{lemma}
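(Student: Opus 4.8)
The plan is to reduce everything to the preceding PDE. By Lemma~\ref{lemma:sigma-der} we have $\frac{\partial}{\partial\sigma}L^{\pi}_\xi(z,\sigma) = -\sigma^{-1}\langle\nabla_z L^{\pi}_\xi(z,\sigma), z\rangle_2$, so the prefactor $\sigma^{-1}$ is already isolated and the entire task collapses to showing that the \emph{radial} quantity $\langle\nabla_z L^{\pi}_\xi(z,\sigma), z\rangle_2$ is bounded by a constant that is independent of both $z$ and $\sigma$. Crucially, the naive route fails here: combining Cauchy--Schwarz with the bound $\|\nabla_z L^{\pi}_\xi\|_2 = \mathcal{O}(\sigma^{-1})$ implied by Theorem~\ref{theorem-3} only gives $\mathcal{O}(\sigma^{-1}\|z\|_2)$, and the stray factor $\|z\|_2$ is exactly the obstacle to be removed.

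To expose the cancellation that tames $\|z\|_2$, I would differentiate $L^{\pi}_\xi$ in $\sigma$ \emph{after} moving $\sigma$ out of the discontinuous factor and into the smooth density. Writing $L^{\pi}_\xi(z,\sigma) = \sigma^{-n}\int_{\mathbb{R}^n} L(u,\xi)\,\pi_\xi(\sigma^{-1}(u-z))\,du$, the $\sigma$-dependence now sits entirely in the $C^{(1)}$ kernel rather than in $L$, so differentiation under the integral sign is legitimate (it is justified by the Derivative decay assumption together with $|L|\le l$, which produce an integrable dominating function). Differentiating $\sigma^{-n}\pi_\xi(\sigma^{-1}(u-z))$ and changing back to $\rho = \sigma^{-1}(u-z)$ yields
$$\frac{\partial}{\partial\sigma}L^{\pi}_\xi(z,\sigma) = -\sigma^{-1}\int_{\mathbb{R}^n} L(z+\sigma\rho,\xi)\,\bigl[\,n\,\pi_\xi(\rho) + \langle\nabla\pi_\xi(\rho),\rho\rangle_2\,\bigr]\,d\rho,$$
which is consistent with Lemma~\ref{lemma:sigma-der} and exhibits $\langle\nabla_z L^{\pi}_\xi, z\rangle_2$ as the $\sigma$-independent integral on the right.

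The final step is a routine bound of that integral. Since $|L(z+\sigma\rho,\xi)|\le l$ by Uniform boundedness, I pull $L$ out in absolute value and are left with $\int_{\mathbb{R}^n}\bigl|\,n\,\pi_\xi(\rho) + \langle\nabla\pi_\xi(\rho),\rho\rangle_2\,\bigr|\,d\rho$. The first term integrates to $n$ because $\pi_\xi$ is a probability density, and the second is dominated by $\int_{\mathbb{R}^n}\|\nabla\pi_\xi(\rho)\|_2\,\|\rho\|_2\,d\rho$, which is finite and $\mathcal{O}(1)$: near the origin the integrand is controlled by the Uniformly bounded derivative assumption, while at infinity Derivative decay gives $\|\nabla\pi_\xi(\rho)\|_2\|\rho\|_2 = \mathcal{O}(\|\rho\|_2^{-n-1})$, which is integrable over $\mathbb{R}^n$. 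Hence the whole integral is $\mathcal{O}(1)$ (treating the fixed dimension $n$ and the constant $l$ as absorbed), and therefore $\bigl|\frac{\partial}{\partial\sigma}L^{\pi}_\xi(z,\sigma)\bigr| = \mathcal{O}(\sigma^{-1})$.

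I expect the only genuinely delicate point to be the very first manipulation: one cannot differentiate $\int L(z+\sigma\rho,\xi)\pi_\xi(\rho)\,d\rho$ directly under the integral because $L$ is discontinuous in its argument. The resolution — and the main technical content — is the change of variables that transfers the $\sigma$-dependence onto the smooth $\pi_\xi$, after which the $\sigma$-derivative becomes an integral against the kernel $n\,\pi_\xi(\rho)+\langle\nabla\pi_\xi(\rho),\rho\rangle_2$; the Derivative decay assumption is precisely what guarantees this kernel is integrable and hence that the radial term stays $\mathcal{O}(1)$ uniformly in $z$.
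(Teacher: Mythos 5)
Your proof is correct and follows essentially the same route as the paper's: write $L^{\pi}_\xi(z,\sigma)$ as an integral with the $\sigma$-dependence placed in the smooth kernel $\sigma^{-n}\pi_\xi(\sigma^{-1}\cdot)$, differentiate under the integral, change variables back so the kernel becomes $n\,\pi_\xi(\rho)+\langle\nabla\pi_\xi(\rho),\rho\rangle_2$, and bound using $|L|\le l$ together with the Derivative decay assumption. Your justification of differentiating under the integral (via a dominating function) is slightly more careful than the paper's appeal to Fubini, but the argument is the same.
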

\begin{proof}
Consider writing $L_\xi^{\pi}(z, \sigma)$ in the integral form:
$$L_\xi^{\pi}(z, \sigma) = \sigma^{-n}\int_{\mathbb{R}^n} L(z + \varepsilon, \xi)\pi(\sigma^{-1}\varepsilon)\mathrm{d}\varepsilon.$$
By Fubini's theorem, we can pass the differentiation $\frac{\partial}{\partial \sigma}$ to inside the integral and obtain:
\begin{multline*}
\frac{\partial}{\partial \sigma}L_\xi^{\pi}(z, \sigma) = -n\sigma^{-n-1}\int_{\mathbb{R}^n} L(z + \varepsilon, \xi)\pi(\sigma^{-1}\varepsilon)\mathrm{d}\varepsilon \\
- \sigma^{-n-2}\int_{\mathbb{R}^n} L(z + \varepsilon, \xi)\langle\nabla\pi(\sigma^{-1}\varepsilon), \varepsilon\rangle\mathrm{d}\varepsilon.
\end{multline*}
Consider the variable $\varepsilon' = \sigma^{-1}\varepsilon$, then we arrive at
\begin{multline*}
\frac{\partial}{\partial \sigma}L_\xi^{\pi}(z, \sigma) = -n\sigma^{-1}\int_{\mathbb{R}^n} L(z + \sigma\varepsilon, \xi)\pi(\varepsilon)\mathrm{d}\varepsilon \\
- \sigma^{-1}\int_{\mathbb{R}^n} L(z + \sigma\varepsilon, \xi)\langle\nabla\pi(\varepsilon), \varepsilon\rangle\mathrm{d}\varepsilon.
\end{multline*}
Taking the absolute value of both sides and using the triangle inequality, we derive
\begin{multline*}
\Big|\frac{\partial}{\partial \sigma}L_\xi^{\pi}\Big| \le  nl\sigma^{-1} + l\sigma^{-1}\int_{\mathbb{R}^n}\|\nabla\pi(\varepsilon)\|_2 \|\varepsilon\|_2\mathrm{d}\varepsilon,
\end{multline*}
where $l = \sup_z |L(z, \xi)| < \infty$ by the Uniform boundedness assumption and the last integral is well defined by the Derivative decay assumption.
\end{proof}
\begin{corollary}
$\sup_z \Big|\big\langle \nabla_z L_\xi^{\pi}, z\big\rangle_2\Big| = \mathcal{O}(1)$ independently from $\sigma$.
\end{corollary}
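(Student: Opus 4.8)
The plan is to read this off directly from the two immediately preceding lemmas by a one-line algebraic rearrangement, so the corollary is essentially a restatement of the $\sigma$-derivative bound. The substantive analytic work has already been carried out in establishing that $\big|\frac{\partial}{\partial\sigma}L_\xi^{\pi}(z,\sigma)\big| = \mathcal{O}(\sigma^{-1})$ uniformly in $z$; all that remains is to transfer this bound across the PDE identity.

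Concretely, I would start from Lemma~\ref{lemma:sigma-der}, which gives the exact identity
$$\frac{\partial}{\partial \sigma}L_\xi^{\pi}(z, \sigma) = - \sigma^{-1}\big\langle\nabla_{z}L^{\pi}_\xi(z, \sigma), z\big\rangle_2.$$
Solving for the inner product yields $\big\langle\nabla_{z}L^{\pi}_\xi, z\big\rangle_2 = -\sigma\,\frac{\partial}{\partial \sigma}L_\xi^{\pi}(z, \sigma)$. Taking absolute values and invoking the preceding lemma, we get
$$\Big|\big\langle\nabla_{z}L^{\pi}_\xi(z, \sigma), z\big\rangle_2\Big| = \sigma\,\Big|\frac{\partial}{\partial \sigma}L_\xi^{\pi}(z, \sigma)\Big| \le \sigma\cdot \mathcal{O}(\sigma^{-1}) = \mathcal{O}(1).$$
The crucial point is that the explicit factor of $\sigma$ exactly cancels the $\sigma^{-1}$ appearing in the derivative bound, so the resulting constant carries no dependence on $\sigma$. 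Since the $\mathcal{O}(\sigma^{-1})$ bound in the previous lemma was itself uniform in $z$ (it depended only on $l=\sup_z|L(z,\xi)|$ from Uniform boundedness and on $\int_{\mathbb{R}^n}\|\nabla\pi(\varepsilon)\|_2\|\varepsilon\|_2\,\mathrm{d}\varepsilon<\infty$ from Derivative decay), taking the supremum over $z$ preserves the bound, giving exactly $\sup_z\big|\langle\nabla_z L_\xi^{\pi}, z\rangle_2\big| = \mathcal{O}(1)$ independently of $\sigma$.

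There is no real obstacle here: the corollary is a formal consequence of the two lemmas and the only thing to verify is that no hidden $\sigma$-dependence sneaks back in through the constant, which it does not because the bound on $\frac{\partial}{\partial\sigma}L_\xi^{\pi}$ is of the form (constant independent of $\sigma$ and $z$) times $\sigma^{-1}$. I would simply remark that the uniformity in $z$ is inherited verbatim from the derivative lemma and conclude.
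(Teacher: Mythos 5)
Your proof is correct and is precisely the argument the paper intends by its one-line proof (``immediate consequence of the previous lemmas''): rearrange the PDE identity $\frac{\partial}{\partial \sigma}L_\xi^{\pi}(z,\sigma) = -\sigma^{-1}\langle\nabla_z L_\xi^{\pi}, z\rangle_2$ and cancel the explicit $\sigma$ against the uniform $\mathcal{O}(\sigma^{-1})$ bound on the $\sigma$-derivative. Your additional remark that the constant is uniform in $z$ because the derivative bound depends only on $l=\sup_z|L(z,\xi)|$ and the Derivative decay integral is exactly the right justification for the supremum in the statement.
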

\begin{proof}
Immediate consequence of the previous lemmas.
\end{proof}

Now, assume that $\sigma = \sigma(z)$ is differentiable and non-zero at $z$. The following lemma describes $\nabla_z L_\xi^{\pi}(z, \sigma(z))$ in terms of $\nabla_z L_\xi^{\pi} := \nabla_z L_\xi^{\pi}(z, \sigma)\big|_{\sigma=\sigma(z)}$.

\begin{lemma}
\label{scalar-free-differential-equiation}
The following formula holds:
$$\nabla_z L_\xi^{\pi}(z, \sigma(z)) = \nabla_z L_\xi^{\pi} -\big\langle \nabla_z L_\xi^{\pi}, z\big\rangle_2 \nabla_z \log \sigma(z).$$
\end{lemma}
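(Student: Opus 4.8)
The plan is to recognize this lemma as a direct application of the multivariable chain rule, with all the genuine content supplied by the PDE of Lemma~\ref{lemma:sigma-der}. The crucial conceptual point is the difference between the two gradients appearing in the statement: the left-hand side is the gradient of the \emph{composite} map $z \mapsto L_\xi^{\pi}(z, \sigma(z))$, in which the scale argument itself depends on $z$, whereas the symbol $\nabla_z L_\xi^{\pi}$ on the right-hand side denotes the partial gradient taken with $\sigma$ held fixed and then evaluated at $\sigma = \sigma(z)$, exactly as defined just before the statement.

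First I would check that the chain rule is legitimately applicable. Writing $L_\xi^{\pi}(z, \sigma) = \sigma^{-n}\int_{\mathbb{R}^n} L(z + \varepsilon, \xi)\pi(\sigma^{-1}\varepsilon)\,\mathrm{d}\varepsilon$, the Continuous differentiability and Derivative decay assumptions on $\pi$ guarantee that this convolution is jointly continuously differentiable in $(z, \sigma)$ for $\sigma > 0$; combined with the assumed differentiability and non-vanishing of $\sigma(z)$, this licenses differentiating the composition. The chain rule then reads coordinatewise as
$$\frac{\partial}{\partial z_i} L_\xi^{\pi}(z, \sigma(z)) = \Big(\frac{\partial}{\partial z_i} L_\xi^{\pi}\Big)\Big|_{\sigma = \sigma(z)} + \Big(\frac{\partial}{\partial \sigma} L_\xi^{\pi}\Big)\Big|_{\sigma = \sigma(z)}\,\frac{\partial}{\partial z_i}\sigma(z),$$
equivalently $\nabla_z L_\xi^{\pi}(z, \sigma(z)) = \nabla_z L_\xi^{\pi} + \big(\tfrac{\partial}{\partial \sigma} L_\xi^{\pi}\big)\,\nabla_z \sigma(z)$ in vector form.

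Next I would substitute the identity of Lemma~\ref{lemma:sigma-der}, namely $\frac{\partial}{\partial \sigma} L_\xi^{\pi} = -\sigma^{-1}\langle \nabla_z L_\xi^{\pi}, z\rangle_2$, evaluated at $\sigma = \sigma(z)$, turning the second term into $-\sigma(z)^{-1}\langle \nabla_z L_\xi^{\pi}, z\rangle_2\,\nabla_z \sigma(z)$. The last step is the elementary identity $\sigma(z)^{-1}\nabla_z \sigma(z) = \nabla_z \log \sigma(z)$, which collapses the expression into the claimed formula. There is essentially no hard step: the result is a one-line computation once Lemma~\ref{lemma:sigma-der} is in hand, and the only point deserving a moment of care is the joint regularity of $L_\xi^{\pi}$ needed to invoke the chain rule, which is already secured by the Smoothing Properties imposed earlier.
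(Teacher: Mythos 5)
Your proposal is correct and follows essentially the same route as the paper's own proof: apply the chain rule to the composite map $z \mapsto L_\xi^{\pi}(z, \sigma(z))$, substitute the PDE of Lemma~\ref{lemma:sigma-der}, and absorb $\sigma(z)^{-1}\nabla_z\sigma(z)$ into $\nabla_z\log\sigma(z)$. The only difference is that you explicitly justify the applicability of the chain rule via the regularity of the smoothing, a point the paper leaves implicit.
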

\begin{proof}
Consider writing 
$$\nabla_z L_\xi^{\pi}(z, \sigma(z)) = \nabla_z L_\xi^{\pi} + \frac{\partial}{\partial \sigma}L_\xi^{\pi}(z, \sigma(z)) \nabla_z \sigma(z).$$
Then, by Lemma \ref{lemma:sigma-der} we obtain the formula.
\end{proof}

\section{Fast ranking metrics computation}
\label{algos}

We need to be able to compute $L(z', z_{\backslash s_i} + \sigma\varepsilon_{\backslash s_i}, \xi)$ for an arbitrary $z'\in\mathbb{R}$ and a position $i$, where $s \in S_n$ represents $s := \mathrm{argsort}(z+\sigma\varepsilon)$ for the CCS estimate (note that there is no ambiguity in computing $\mathrm{argsort}$ since with probability one $z_{j_1}+\sigma\varepsilon_{j_1} \ne z_{j_2} + \sigma\varepsilon_{j_2}$ for $j_1 \ne j_2$). Moreover, $\mathrm{argsort}$ requires  $\mathcal{O}(n\log n)$ operations.

Typically, the evaluation of $L(\cdots)$ costs $\mathcal{O}(n)$, e.g., for $\mathrm{ERR}$. Fortunately, for many losses it is possible to exploit the structure of the loss that allows evaluating $L$ in $\mathcal{O}(1)$ operations using some precomputed shared cumulative statistics related to the loss which can be computed in $\mathcal{O}(n)$ operations and $\mathcal{O}(n)$ memory. 

For all $L \in \mathcal{R}_1$ in the worst case we need $\mathcal{O}(n^2)$ evaluations of $L$ to compute the CCS (for each of $n$ coordinates to sum up at most $n$ evaluations). Thus, the overall worst case asymptotic of the algorithm would be $\mathcal{O}(n\log n + n + n^2) = \mathcal{O}(n^2)$ if the evaluation costs $\mathcal{O}(1)$. For the sake of simplicity, we generalize both $\mathrm{NDCG@k}$ and $\mathrm{ERR}$ into one class of losses:
\begin{equation}
    \label{GMC}
    L(z, \xi) = -\sum_{i=1}^{n} w_i g(r_{s_i}) \prod_{j=1}^{i-1} d_{s_j},
\end{equation}
where $W = \{w_i\}_{i=1}^n$ are some predefined positions' weights typically picked as $\frac{\mathbb{1}_{\{i \le k\}}}{\max_z \mathrm{DCG@k} \log (i + 1)}$ for $\mathrm{NDCG@k}$ and $\frac{1}{i}$ for $\mathrm{ERR}$); $D = \{d_i\}_{i=1}^n$ is typically picked as $d_i = 1\forall i$ for $-\mathrm{NDCG@k}$ and $d_i = 1 - r_i \, \forall i$ for $\mathrm{ERR}$; and finally we define $g(r) = r$ for $r \in [0, 1]$ and $g(r) = \frac{2^r - 1}{2^4}$ for $r \in \{0, 1, 2, 3, 4\}$. 

First, we need to define and compute the following cumulative product:
\begin{equation*}
\begin{split}
    p_m &= d_{s_{m - 1}} p_{m-1} = \prod_{j=1}^{m-1}d_{s_j} \text{ if }m > 1,
\end{split}
\end{equation*}
where $p_1 = 1$. Denote $P := \{p_i\}_{i=1}^n$. Next, we use them we define the following cumulative sums:
\begin{equation*}
\begin{split}
    S^{\mathrm{up}}_m &= S^{\mathrm{up}}_{m-1} + w_{m + 1} g(r_{s_m}) p_m  \text{ if }m > 1,
\end{split}
\end{equation*}
\begin{equation*}
\begin{split}
    S^{\mathrm{mid}}_m &= S^{\mathrm{mid}}_{m-1} + w_{m} g(r_{s_m}) p_m \text{ if }m > 0,
\end{split}
\end{equation*}
\begin{equation*}
\begin{split}
    S^{\mathrm{low}}_m &= S^{\mathrm{low}}_{m-1} + w_{m - 1} g(r_{s_m}) p_m  \text{ if }m > 0,
\end{split}
\end{equation*}
where $S^{\mathrm{up}}_0 = S^{\mathrm{up}}_1 = S^{\mathrm{mid}}_0 = S^{\mathrm{low}}_0 = 0$.

All these cumulative statistics can be computed at the same time while we compute $L(z + \sigma\varepsilon, \xi)$. Note that we need additional $O(n)$ memory to store these statistics. 

Now fix a position $i$ and score $z'$. Express $L(z', z_{\backslash s_i} + \sigma\varepsilon_{\backslash s_i}, \xi)$ as $(L(z', z_{\backslash s_i} + \sigma\varepsilon_{\backslash s_i}, \xi) - L(z + \sigma\varepsilon, \xi)) + L(z + \sigma\varepsilon, \xi)$. Thus, we need to compute $L(z', z_{\backslash s_i} + \sigma\varepsilon_{\backslash s_i}, \xi) - L(z + \sigma\varepsilon, \xi)$.

If $z' > z_{s_i} + \sigma \varepsilon_{s_i}$, we define $i' := i$; otherwise, define $i' := i-1$ --- this variable represents the new position of the $s_i$-th document in $z+\sigma \varepsilon$. Also, if $z' > z_{s_i} + \sigma \varepsilon_{s_i}$, we define: 
\begin{equation*}
    \begin{split}
        & T^\mathrm{{low}} = S^{\mathrm{mid}}_{i'} - S^{\mathrm{mid}}_{i},\\
        & T^{\mathrm{up}} = d_{s_i}^{-1}(S^{\mathrm{up}}_{i'} - S^{\mathrm{up}}_{i}),\\
        & w  = w_{i} p_i,\\
        & w' = w_{i'} d_{s_i}^{-1}p_{i'}.
    \end{split}
\end{equation*}
Otherwise, define: 
\begin{equation*}
    \begin{split}
        & T^{\mathrm{low}} = d_{s_i}(S^{\mathrm{low}}_{i'} - S^{\mathrm{low}}_{i - 1}),\\
        & T^{\mathrm{up}} = S^{\mathrm{mid}}_{i'} - S^{\mathrm{mid}}_{i-1},\\
        & w  = w_{i} p_i,\\
        & w' = w_{i' - 1} p_{i'}.
    \end{split}
\end{equation*}

Then, we calculate $L(z', z_{\backslash s_i} + \sigma\varepsilon_{\backslash s_i}, \xi) - L(z + \sigma\varepsilon, \xi)$ as $g(r_{s_i}) (w - w') - (T^{\mathrm{up}} - T^{\mathrm{low}})$. The meaning of the formula is simple: we measure the change of gain of the $s_i$-th document if we change its score to $z'$ from $z_{s_i}+\sigma\varepsilon_{s_i}$ minus the difference of gains of all documents on positions from $i'$ up to $i - 1$, if $i' < i$, and from $i + 1$ up to $i' - 1$, if $i' > i$.

The above formulas can be verified directly by evaluating the cases when $z' > z_{s_i} + \sigma \varepsilon_{s_i}$ or $z' < z_{s_i} + \sigma \varepsilon_{s_i}$ and expanding $S^{*}_m$ as $\sum_i w_{i \pm 1} g(r_{s_i}) p_i$. Note that all differences $S^{*}_{i} - S^{*}_{j}$ take into account all documents on positions from $j + 1$ up to $i$ inclusively.

Note that $S^{\mathrm{mid}}_{n} \equiv L(z + \sigma\varepsilon, \xi)$. Indeed, $$\sum_{i=1}^{n} w_i g(r_{s_i}) p_i = \sum_{i=1}^{n} w_i g(r_{s_i}) \prod_{j=1}^{i-1} d_{s_j} = L(z + \sigma\varepsilon, \xi).$$
Therefore, we obtain:
\begin{multline}
    L(z', z_{\backslash s_i} + \sigma\varepsilon_{\backslash s_i}, \xi) = g(r_{s_i}) (w - w') \\ - (T^{\mathrm{up}} - T^{\mathrm{low}}) + S^{\mathrm{mid}}_k.
\end{multline}

\section{Global Optimization by Diffusion}

\subsection{Overview of SGLB idea}\label{appendix:sglb}

Global convergence of SGLB is guaranteed by a so-called Predictions' Space Langevin Dynamics Stochastic Differential Equation
\begin{equation*}
\begin{split}
\mathrm{d}F(t) = -\gamma F(t)\mathrm{d}t - P\nabla_F \mathcal{L}_N^{\pi}&(F(t), \sigma)\mathrm{d}t \\
&+ \sqrt{2\beta^{-1}P}\mathrm{d}W(t),
\end{split}
\end{equation*}
where $F(t) := \Phi\theta(t) = (\Phi_{\xi_1}\theta(t), \ldots, \Phi_{\xi_N}\theta(t)) = (f_{\xi_1}(\theta), \ldots, f_{\xi_N}(\theta)) \in \mathbb{R}^{N'}$ denotes the predictions Markov Process on the train set $\mathcal{D}_N$,  $W(t)$ is a standard Wiener process with values in $\mathbb{R}^{N'}$, $N' := \sum_{i=1}^N n_i$, $P = P^T$ is an implicit preconditioner matrix of the boosting algorithm, and $\beta > 0$ is a temperature parameter that controls exploration/exploitation trade-off. Note that here we override the notation $\mathcal{L}_N(F) \equiv \mathcal{L}_N(\theta)$ since $F = \Phi \theta$. Further by $\Gamma = \sqrt{P^{-1}}$ we denote an implicitly defined regularization matrix.

The global convergence is implied by the fact that as $t \rightarrow \infty$, the stationary distribution $p_\beta(F)$ of $F(t)$ concentrates around the global optima of the implicitly regularized loss $$\mathcal{L}^{\pi}_N(F, \sigma, \gamma) = \mathcal{L}^{\pi}_N(F, \sigma) + \frac{\gamma}{2} \|\Gamma F\|_2^2\,.$$ 
More formally, the stationary distribution is $p_\beta(F) \propto \exp(-\beta \mathcal{L}^{\pi}_N(F, \sigma, \gamma))$. According to \citet{SGLB}, optimization is performed within a linear space $V := \mathrm{im\,}\Phi$ that encodes all possible predictions $F$ of all possible ensembles formed by the weak learners associated with the boosting algorithm. We refer interested readers to \cite{SGLB} for the details. 

\subsection{Proof of Theorem~\ref{theorem-5}}\label{appendix:theorem-5}

Let us first prove the following lemma.

\begin{lemma}
\label{thm:lipschitz}
The function $\mathcal{L}_N^{\pi}(F, \sigma)$ is uniformly bounded, Lipschitz continuous with constant $L_0 = \mathcal{O}(\sigma^{-1})$, and Lipschitz smooth with constant $L_1 = \mathcal{O}(\sigma^{-2})$.
\end{lemma}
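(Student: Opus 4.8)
The plan is to exploit the block-separable structure of $\mathcal{L}_N^{\pi}$. Writing $F=(z^{(1)},\dots,z^{(N)})$ with $z^{(i)}=f_{\xi_i}(\theta)$, we have $\mathcal{L}_N^{\pi}(F,\sigma)=\frac1N\sum_i L_{\xi_i}^{\pi}(z^{(i)},\sigma)$, so the gradient is the concatenation of the blocks $\frac1N\nabla_z L_{\xi_i}^{\pi}$ and the Hessian is block-diagonal with blocks $\frac1N\nabla_z^2 L_{\xi_i}^{\pi}$. Hence it suffices to bound, uniformly in $z$ and over the finitely many contexts $\xi_i$, the quantities $|L_\xi^\pi|$, $\|\nabla_z L_\xi^\pi\|_2$ and $\|\nabla_z^2 L_\xi^\pi\|_{op}$; the factor $1/N$ and the fixed dimensions $n_i$ are then absorbed into the $\mathcal O(\cdot)$ constants. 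Boundedness is immediate: $|L_\xi^\pi(z,\sigma)|=|\mathbb E\,L(z+\sigma\varepsilon,\xi)|\le l$ by the Uniform boundedness assumption, so $|\mathcal L_N^\pi|\le l$.

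For Lipschitz continuity I would bound each coordinate derivative via Theorem~\ref{theorem-3}, which gives $\partial_{z_j}L_\xi^\pi=-\sigma^{-1}\mathbb E_{\varepsilon_{\backslash j}}\sum_{s=1}^{k'}\Delta l_j(b_s)\pi_\xi^j(\sigma^{-1}(b_s-z_j))$. Each jump satisfies $|\Delta l_j(b_s)|\le 2l$, there are at most $k'\le k-1$ of them (Lemma~\ref{lemma=finite-jumps}), and the conditional density $\pi_\xi^j$ is bounded (a $C^1$ density with uniformly bounded derivative is necessarily bounded); thus $|\partial_{z_j}L_\xi^\pi|=\mathcal O(\sigma^{-1})$ uniformly, which is precisely the uniform-boundedness statement already recorded for the CCS estimate. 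Consequently $\|\nabla_z L_\xi^\pi\|_2=\mathcal O(\sigma^{-1})$, and by the block structure $\|\nabla_F\mathcal L_N^\pi\|_2=\mathcal O(\sigma^{-1})$, i.e. $L_0=\mathcal O(\sigma^{-1})$. Equivalently one may use the convolution form $\nabla_z L_\xi^\pi=-\sigma^{-n-1}\int L(y,\xi)(\nabla\pi)(\sigma^{-1}(y-z))\,dy$ and the Derivative decay assumption to make $\int\|\nabla\pi\|_2$ finite.

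For Lipschitz smoothness I would differentiate the representation of Theorem~\ref{theorem-3} once more and check that every resulting term is $\mathcal O(\sigma^{-2})$. The diagonal entry is cleanest: only the explicit $z_j$ inside $\pi_\xi^j(\sigma^{-1}(b_s-z_j))$ contributes, since the breaking points $b_s$ depend on $z_{\backslash j}$ alone, so $\partial_{z_j}^2 L_\xi^\pi=\sigma^{-2}\mathbb E_{\varepsilon_{\backslash j}}\sum_s\Delta l_j(b_s)(\pi_\xi^j)'(\sigma^{-1}(b_s-z_j))$, which is $\mathcal O(\sigma^{-2})$ because $(\pi_\xi^j)'$ is bounded under Continuous differentiability and Uniformly bounded derivative. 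For the off-diagonal entries one must also track the dependence of $b_s$ on $z_{\backslash j}$; in the LTR case this is transparent, as by Corollary~\ref{cor-1} the breaking points are the coordinates $z_s+\sigma\varepsilon_s$, so differentiating in $z_{j'}$ again lands on a $\pi_\xi^j$ factor and produces one more $\sigma^{-1}(\pi_\xi^j)'$, still $\mathcal O(\sigma^{-2})$. Summing the $\mathcal O(n^2)$ uniformly bounded entries and dividing by $N$ gives $\|\nabla_F^2\mathcal L_N^\pi\|_{op}=\mathcal O(\sigma^{-2})$, i.e. $L_1=\mathcal O(\sigma^{-2})$. For the Gaussian smoothing $\mathcal N(-\mu r, I_n)$ actually used there is a shortcut that bypasses the breaking-point bookkeeping altogether: since $\pi\in C^2$, $\nabla_z^2 L_\xi^\pi=\sigma^{-n-2}\int L(y,\xi)(\nabla^2\pi)(\sigma^{-1}(y-z))\,dy$, whence $\|\nabla_z^2 L_\xi^\pi\|_{op}\le l\,\sigma^{-2}\!\int\|\nabla^2\pi\|_{op}=\mathcal O(\sigma^{-2})$ directly.

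The main obstacle is the smoothness step: rigorously justifying differentiation under the expectation $\mathbb E_{\varepsilon_{\backslash j}}$ (by dominated convergence, using boundedness of $L$ and of the relevant densities), and, in the general $\mathcal R_0$ case, correctly handling the off-diagonal contributions arising from the motion of the breaking points $b_s(z_{\backslash j})$, together with verifying that $(\pi_\xi^j)'$ is uniformly bounded. Both are immediate for the Gaussian family via the clean convolution estimate above, and in general follow from the stated smoothing regularity. Everything else — boundedness, the first-order bound, and the assembly over blocks — is routine.
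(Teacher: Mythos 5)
Your proof is correct and takes essentially the same route as the paper's own (two-sentence) argument: Lipschitz continuity of $\mathcal{L}_N^{\pi}$ follows from the uniform $\mathcal{O}(\sigma^{-1})$ bound on the CCS representation of Theorem~\ref{theorem-3}, and Lipschitz smoothness from differentiating that representation once more, using the uniform boundedness of $\nabla\pi$ to get the $\mathcal{O}(\sigma^{-2})$ Hessian bound. Your extra care (the block decomposition over queries, the off-diagonal terms coming from the motion of the breaking points, and the clean Gaussian shortcut $\nabla_z^2 L_\xi^{\pi}(z,\sigma) = \sigma^{-2}\int L(z+\sigma\varepsilon,\xi)\,\nabla^2\pi(\varepsilon)\,\mathrm{d}\varepsilon$) only fills in details the paper leaves implicit.
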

\begin{proof} The proof of Lipschitz continuity is a direct consequence of the uniform boundedness by $\mathcal{O}(\sigma^{-1})$ of CCS. If we differentiate CCS estimate one more time, we obtain the estimates for the Hessian that must be uniformly bounded by $\mathcal{O}(\sigma^{-2})$ due to the uniform boundedness of $\nabla \pi$, thus giving Lipschitz smoothness.
\end{proof}

In addition to Lipschitz smoothness, continuity, and boundedness from above, we also need $\|\widehat{\nabla}_{CC}\mathcal{L}_N^{\pi}(F, \sigma) - \nabla\mathcal{L}_N^{\pi}(F, \sigma)\|_2 = \mathcal{O}(1)$~\cite{SGLB}, but that condition is satisfied since both terms are uniformly bounded by $\mathcal{O}(\sigma^{-1})$. Thus, the algorithm has limiting stationary measure $p_\beta(F) \propto \exp(-\beta \mathcal{L}^{\pi}_N(F, \sigma, \gamma))$. 

Then, consistency of the smoothing ensures that as $\sigma \rightarrow 0_+$, $p_\beta(F) \rightarrow p_\beta^*(F)$, where $p_\beta^*(F) \propto \exp(-\beta(\mathcal{L}_N(F) + \frac{\gamma}{2}\|\Gamma F\|_2^2))$  and thus for $\beta \gg 1$ the measures $p_\beta^*$ and $p_\beta$ for $\sigma \approx 0$ concentrate around the global optima of $\mathcal{L}_N(F)$.

\subsection{Proof of Theorem~\ref{theorem=gen-gap}}\label{appendix:theorem-6}

Following \citet{DBLP:journals/corr/RaginskyRT17, SGLB}, we immediately obtain that $\big|\mathbb{E}_{\theta\sim p_\beta(\theta)} \mathcal{L}^{\pi}(\theta, \sigma) - \mathbb{E}_{\theta\sim p_\beta(\theta)}\mathcal{L}_N^{\pi}(\theta, \sigma)\big| = \mathcal{O}(\frac{(\beta+d)^2}{N\lambda_*})$ with $\lambda_* > 0$ and $d = V_{\mathcal{B}}$. In general non-convex case $\frac{1}{\lambda_*}$ can be of order $\exp(\mathcal{O}(d))$ \cite{DBLP:journals/corr/RaginskyRT17} but for smoothed SF losses we can give a better estimate without exponential dependence on the dimension.

Observe that our measure is the sum of uniformly bounded Lipschitz smooth with constant $\mathcal{O}(\sigma^{-2})$ and a Gaussian $\frac{\gamma}{2}\|\Gamma\Phi\theta\|_2^2$, then the more appropriate bound from the logarithmic Sobolev inequality applies according to Lemma~2.1~\cite{2015arXiv150702389B} $\frac{1}{\lambda_*} = \mathcal{O}\left(\frac{\exp(\mathcal{O}(\frac{\beta}{\gamma\sigma^2}))}{\gamma\beta}\right)$ being dimension-free. Note that Miclo's trick in the proof of the lemma should be skipped since $\mathcal{L}^{\pi}_N(\theta, \sigma)$ is already fine enough. Coupling the spectral gap bound with the generalization gap, we obtain the theorem.

\section{Parameter tuning}\label{appendix:tuning}

For tuning, we use the random search (500 samples) with the following distributions:
\begin{itemize}
    \item For \textit{learning-rate} log-uniform distribution over $[10^{-3}, 1]$.
    \item For \textit{l2-leaf-reg} log-uniform distribution over $[10^{-1}, 10^1]$ for baselines and \textit{l2-leaf-reg=0} for StochasticRank.
    \item For noise strength \cite{48689} uniform distribution over $[0, 1]$.
    \item For \textit{depth} uniform distribution over $\{6, 7, 8, 9, 10\}$.
    \item For \textit{model-shrink-rate} log-uniform distribution over $[10^{-5}, 10^{-2}]$ for StochasticRank.
    \item For \textit{diffusion-temperature} log-uniform distribution over $[10^8, 10^{11}]$ for StochasticRank.
    \item For \textit{mu} log-uniform distribution over $[10^{-2}, 10]$ for StochasticRank-$\mathcal{R}_1$.
\end{itemize}

\end{document}